\theoremstyle{plain}
\newtheorem{theorem}{Theorem}[section]
\newtheorem{lemma}[theorem]{Lemma}
\theoremstyle{definition}
\newtheorem{definition}[theorem]{Definition}
\newtheorem{assumption}[theorem]{Assumption}
\theoremstyle{remark}
\newtheorem{remark}[theorem]{Remark}
\title{On the Convergence of Clustered Federated Learning}
\author{Jie MA$^{1}$, Guodong Long$^{1}$, Tianyi Zhou$^{2,3}$, Jing Jiang$^1$, Chengqi Zhang$^1$\\
	$^1$Australian Artificial Intelligence Institute, University of Technology Sydney\\
	$^2$University of Washington, Seattle, $^3$University of Maryland, College Park\\
	% \{yijun.yang-1, jie.ma-5\}@student.uts.edu.au, jing.jiang@uts.edu.au, tianyizh@uw.edu, shiyh@sustech.edu.cn}
}
\begin{document}

\maketitle

\begin{abstract}
	Knowledge sharing and model personalization are essential components to tackle the non-IID challenge in federated learning (FL). Most existing FL methods focus on two extremes: 1) to learn a shared model to serve all clients with non-IID data, and 2) to learn personalized models for each client, namely personalized FL. There is a trade-off solution, namely clustered FL or cluster-wise personalized FL, which aims to cluster similar clients into one cluster, and then learn a shared model for all clients within a cluster. This paper is to revisit the research of clustered FL by formulating them into a bi-level optimization framework that could unify existing methods. We propose a new theoretical analysis framework to prove the convergence by considering the clusterability among clients. In addition, we embody this framework in an algorithm, named \textbf{We}ighted \textbf{C}lustered \textbf{F}ederated \textbf{L}earning (WeCFL). Empirical analysis verifies the theoretical results and demonstrates the effectiveness of the proposed WeCFL under the proposed cluster-wise non-IID settings.
	
	\looseness=-1
\end{abstract}

%%%%%%%%%%%%%%%%%%%%%%%%%%%%%%%%%%%%%%%%%%%%%%%%%%%%%%%%%%%%%%%%%%%%
%%% INTRODUCTION %%%
%%%%%%%%%%%%%%%%%%%%%%%%%%%%%%%%%%%%%%%%%%%%%%%%%%%%%%%%%%%%%%%%%%%%

\section{Introduction}
Since Federated Learning (FL)\cite{mcmahan2017communication} was proposed firstly in 2017, it has evolved into a new-generation collaborative machine learning framework with applications in a range of scenarios, including Google's Gboard on Android \cite{mcmahan2017communication}, Apple's Siri \cite{apple2017federated}, Computer Visions \cite{luo2019real,jallepalli2021federated,he2021fedcv}, Smart Cities \cite{zheng2022applications} and Healthcare \cite{rieke2020future,xu2021federated,long2022federated}. 
The vanilla FL method, known as FedAvg \cite{mcmahan2017communication}, is derived from a distributed machine learning framework before it is applied to a large-scale mobile service system. In particular, it aims to train a single shared model at the server by aggregating the smartphones' local model, trained with its own data. Thus, the end users' private data in each smartphone will not be uploaded to the cloud server. FedAvg first proposed the non-IID problem in FL that the data distribution varied across clients.

\begin{wrapfigure}{R}{0.45\textwidth}
	\vspace{-0.8cm}
	\centering
	\includegraphics[width=0.44\textwidth]{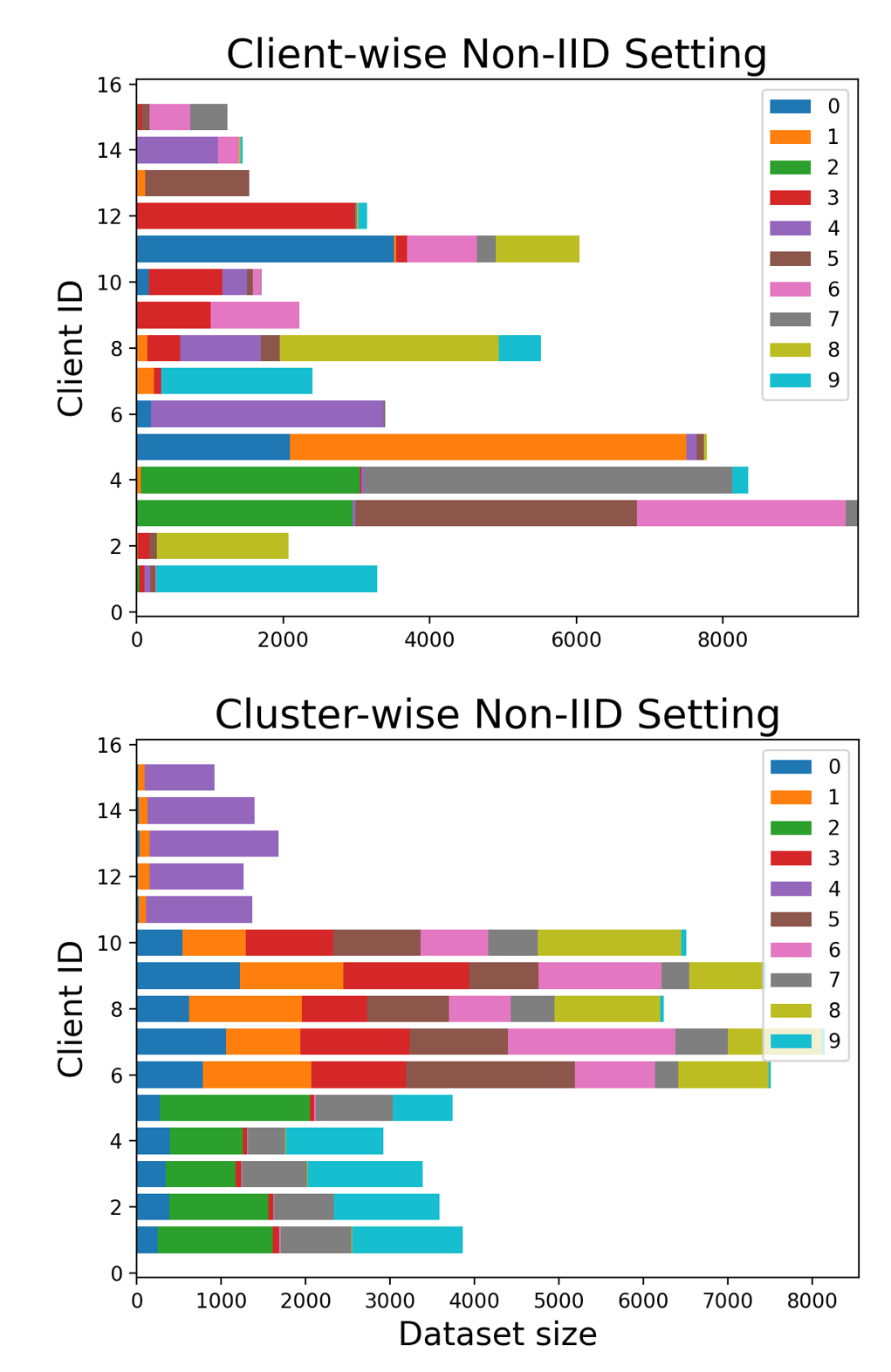}
	\caption{\small Examples of client-wise and cluster-wise non-IID. Color labels represent 10 classes and the length of bar represents the \# of instances. %The client-wise non-IID data is derived by partitioning dataset with big variance (Dirichlet \protect\cite{hsu2019measuring}, $\alpha$=0.1) across clients. The cluster-wise non-IID data is derived using big variance ($\alpha$=0.1) across clusters and small variance ($\alpha$=10) across intra-cluster clients. 
	}
	\vspace{-0.5cm}
	\label{client_cluster}
\end{wrapfigure}

% Introducing the importance of cluster-wise data.
Most of existing personalized FL research focus on client-wise non-IID setting that does not assume any complicated structure. For example, using Dirichlet distribution with hyperparameter $\alpha$ to simulate the non-IID data generation or partition across clients \cite{hsu2019measuring}. However, cluster-wise data are more common scenarios in real applications, such as segmenting users by demographic features including gender, age, location, etc. Moreover, there is a general assumption that clients with similar backgrounds are very likely to make similar decisions, thus generating the data with similar distributions. In the meantime, users with various backgrounds are very likely to have very different actions when encountering the same scenarios. This assumption is widely applied to population-based marketing strategy and cohort-based user behavior analytics. 
%Due to the privacy sensitivity of FL system, there is very few public benchmark datasets collected from a real FL system. To simulate the non-IID scenario in FL system, the most widely used method is to divide an existing benchmark datasets, e.g. MINIST, CIFAR, and ImageNet, into many pieces that each pieced of data represent one client in FL system. Specifically, it is popular to use Dirichlet distribution with hyperparameter $\alpha$ to control the degree of non-IID \cite{hsu2019measuring} etc. 
%Non-IID data is usually generated client-wisely, but in real applications, cluster-wise data is the most common scenarios, such as segmenting users by demographic features including gender, age, location, etc. There is a general assumption that clients with similar backgrounds are very likely to make similar decisions, thus generating the data with similar distributions. In the meantime, users with various backgrounds are very likely to have very different actions when encountering the same scenarios. This assumption is widely applied to population-based marketing strategy and cohort-based user behavior analytics. 

% Introducing cluster-wise non-IID data.
As introduced by \cite{kairouz2021advances}, the non-IID could be categorized to feature distribution skew, label distribution skew, concept drift, and quantity skew. In this paper, we will further expand the non-IID taxonomy in another dimension to be client-wise non-IID, and cluster-wise non-IID. As illustrated in \textbf{Figure \ref{client_cluster}}, the client-wise non-IID has a large variance in label distributions across clients, and then the cluster-wise non-IID has a large variance across inter-cluster clients while having a very small variance across intra-cluster clients. In general, the clustered FL will perform the best on cluster-wise non-IID data. In client-wise non-IID, the clustered FL method can outperform single model-based FL by leveraging multiple cluster-wise personalized models to alleviate the non-IID issue. In the meantime, the clustered FL is also a competitive solution that can properly balance the model personalization and generalization, while the client-wise personalized FL is usually sensitive to the over-fitting in local fine-tuning.  

%
%To tackle the non-IID challenge in FL, some works \cite{li2020federated,reddi2020adaptive} focus on training a single robust model at the server. In recent, personalized FL methods \cite{li2021ditto,deng2020adaptive,tan2022toward,fallah2020personalized} are proposed to learn client-specific personalized models using the global model as the component of knowledge sharing. Therefore, the objective of FL research has been changed from learning a server-based model to many client-specific models. There is a trade-off solution, namely clustered FL or cluster-wise personalized FL, that aims to cluster similar clients into a cluster, and then learn cluster-wised personalized models \cite{ghosh2020efficient,mansour2020three,xie2021multi}.
%In the clustered FL, knowledge sharing and personalization are cluster-wise. Thus, discovering clusters is the key component of clustered FL. 

% Introducing our motivation by pointing out the problem of existing methods.
There are various existing clustered FL methods \cite{ghosh2020efficient,mansour2020three,xie2021multi,sattler2020clustered}.
However, clusterability of clients is not well studied in the existing clustered FL methods that usually treats clustering as an add-on component for FedAvg framework \cite{mcmahan2017communication}. Moreover, a few fundamental problems still need to be further studied, such as how to represent a client and measure distance in a clustering procedure in the FL contexts, how to measure the clusterability and clustering quality that should be integrated with the learning objective of FL system.
%However, most existing clustered FL works are based on the assumption and evaluation on client-wise non-IID. Moreover, most of them fail to consider clients' weight in the clustering process in the clustered FL. The FL's learning objective is usually a client-wise weighted sum-of-loss. Thus, the clustering process should be consistent with this learning objective by considering the weighted clients.

This paper takes the first step towards addressing the above problems. We start by revisiting the existing clustered FL and formulate them into a unified bi-level optimization problem. We then propose a \textbf{We}ighted \textbf{C}lustered \textbf{F}ederated \textbf{L}earning (WeCFL) framework that represents each client by their model parameters and measures their distance by Euclidean distance in parameter space. Moreover, WeCFL keeps consistent with the weighted loss in FL by considering weighted clients for clustering. All these components are combined into a learning process in cluster-wise non-IID federated setting, in which we study the clusterability among FL clients. We then develop a new theoretical analysis framework to conduct convergence analysis on FL with non-IID data. 

Our contributions are summarized as below.
\begin{itemize}
	\item We propose the first cluster-wise non-IID setting in FL. 
	\item We formulate the clustered FL problem into a unified bi-level optimization framework. 
	\item We propose a novel Weighted Clustered Federated Learning (WeCFL) algorithm.
	\item We propose a new theoretical framework for conducting convergence analysis in clustered FL by considering a new clusterability measure $B$ in a widely-used framework.
\end{itemize}

The remaining sections of the paper are organized as follows. Section \ref{sec:related-work} introduces related work. We will discuss the clustered FL through a new perspective in Section \ref{sec:perspective}, and then formulate the problem in Section \ref{sec:problem}. The methodology is introduced in Section \ref{sec:methodology} with convergence analysis in Section \ref{sec:convergence}. Experimental settings and empirical study are discussed in Section \ref{sec:exp-setting} and \ref{sec:exp-analysis}, respectively.

%%%%%%%%%%%%%%%%%%%%%%%%%%%%%%%%%%%%%%%%%%%%%%%%%%%%%%%%%%%%%%%%%%%%
%%% RELATED WORK %%%
%%%%%%%%%%%%%%%%%%%%%%%%%%%%%%%%%%%%%%%%%%%%%%%%%%%%%%%%%%%%%%%%%%%%

\section{Related Work} \label{sec:related-work}

\subsection{Federated learning with non-IID}
The vanilla FL method, FedAvg \cite{mcmahan2017communication}, has been suffering from the non-IID challenge where each client's local data distribution is varied \cite{kairouz2021advances}. To tackle this challenge,
\cite{li2019feddane} proposed FedDANE by adapting the DANE to a federated setting. In particular, FedDANE is a federated Newton-type optimization method. \cite{li2020federated} proposed FedProx for the generalization and re-parameterization of FedAvg. It adds a proximal term to clients' local objective functions by constraining the parameter-based distance between the global model and local model. \cite{reddi2020adaptive} proposes to use adaptive learning rates to FL clients and \cite{jiang2020decentralized} conduct attention-based adaptive weighting to aggregate clients' models. \cite{li2019convergence} studies the convergence of the FedAvg on non-IID scenarios.

\textbf{Cluster-wise PFL}, also named Clustered FL, is to partition users into several groups and then train cluster-wise personalized models correspondingly. Kmeans-based clustered FL \cite{xie2021multi} measured the distance using model parameters and accuracy respectively. Hierarchical clustering \cite{briggs2020federated} has been applied to FL either. CFL \cite{sattler2020clustered} divides clients into two partitions based on the cosine similarity of the client gradients, then checks whether a partition is congruent by the norm of client gradients (hierarchy clustering). \cite{ghosh2020efficient} studied the framework of both one-shot and iterative clustered FL (IFCA). The cluster index of minimum loss for all clients was used for the assignment, which was also studied by HypCluster \cite{mansour2020three}. Few-shot clustering has been introduced to clustered FL by \cite{Dennis2021-wc,awasthi2012improved}.  FedP2P \cite{chou2021efficient} let clents in one cluster to communicate with each other to be communication-efficient.

\textbf{Client-wise PFL} usually assumes each client's data distribution is different from others; thus, each client should have a personalized model on its device. 
A naive PFL method is to learn a global model at server while conducting local fine-tuning on each client \cite{cheng2021fine,fallah2020personalized}. Ditto \cite{li2021ditto} was proposed as a bi-level optimization framework for PFL while considering a regularization term to constrain the distance between the local model and global model. The Model-Agnostic Meta-Learning (MAML) framework is also studied to personalize the clients \cite{fallah2020personalized}. Research \cite{t2020personalized} uses Moreau envelopes as clients’ regularized loss functions to optimize a bi-level problem for PFL. FedRep \cite{collins2021exploiting} learns a globally shared representation and a locally personalized head for each client. Investigations by \cite{shamsian2021personalized,chen2018federated} that aim to train a global hyper-network or meta-learner instead of a global model before sending it to clients for local optimization. SCAFFOLD \cite{karimireddy2020scaffold} proposes to learn personalized control variates that correct the local model accordingly. Layer-wise personalization \cite{arivazhagan2019federated,liang2020think} and Representation-wise personalization \cite{tan2021fedproto} are two simple but effective solution of PFL. Hermes \cite{li2021hermes} and LotterFL \cite{li2021lotteryfl} are two PFL methods considering communication efficiency for mobile clients. 

\subsection{Convergence analysis of FL}
There are few works about the convergence analysis of Clustered FL on non-IID data, but many works about FL on non-IID data. Such works can be traced back to the convergence analysis of Local SGD \cite{stich2018local,khaled2020tighter}, which differs from FedAvg in local update epochs and some special settings such as non-IID, straggler and privacy attack. Since almost all of the algorithms in FL are solved by stochastic gradient descent (SGD), the convergence analysis is usually based on the SGD convergence analysis framework. In work by \cite{li2019convergence}, the convergence of FedAvg on non-IID and partial participation is analyzed in detail, and the convergence rate is $O(\frac{1}{T})$. The impacts of some hyperparameters, such as local epochs, are also discussed. A guide by \cite{wang2021field}
provides recommendations and guidelines on how to formulate, design, evaluate and analyze FL optimization algorithms, in which convergence analysis is discussed in a separate section. 
There are some recent works \cite{xingbig, li2021ditto} model client-wise PFL task into a bi-level optimization framework and then conduct convergence analysis.

%%%%%%%%%%%%%%%%%%%%%%%%%%%%%%%%%%%%%%%%%%%%%%%%%%%%%%%%%%%%%%%%%%%%
%%% NEW PERSPECTIVE %%%
%%%%%%%%%%%%%%%%%%%%%%%%%%%%%%%%%%%%%%%%%%%%%%%%%%%%%%%%%%%%%%%%%%%%

\section{A New Perspective for Clustered FL} \label{sec:perspective}

Existing clustered FL methods focus on the learning process in a federated setting, thus, the clustering components are an add-on part of the overall learning process in the FL system. We will rethink the clustered FL from a clustering perspective while considering the FL contexts. To conduct clustering in the FL system, there are several major challenges that need to be resolved.

\begin{itemize}
	\item Challenge 1: How to represent an FL client into an instance or point in clustering?
	\item Challenge 2: How do measure the distance or similarity for FL clients?
	\item Challenge 3: How to evaluate the quality of clustering by considering the FL's objective?
	\item Challenge 4: How to choose a clustering algorithm to be integrated with the FL?
\end{itemize}

For Challenge 1, existing Clustered FL methods usually use \textbf{client-specific models} to represent the client in a clustering. Using model parameters will be a straightforward solution that is to be consistent with the setting of FL. An alternative option is to use technology, e.g. federated generative adversarial learning \cite{rasouli2020fedgan} and federated representation learning \cite{zhang2020federated,li2021model}, to transform the client-specific dataset or distribution into a vector to represent the client. However, the operation of embedding datasets usually cause extra privacy concern for end-users, thus it still be a controversial topic in practice. 

For Challenge 2, the selection of distance and similarity metrics is highly reliant on the selection of client-specific representation - the solution of Challenge 1. With the given representation vector, some clustered FL reuse the classical distance and similarity measurement, such as Euclidean distance \cite{xie2021multi}, cosine similarity \cite{sattler2020clustered} and KL divergence \cite{lee2021robust}. Moreover, a key issue for this challenge is to ensure the \textbf{clusterablity} for the clients or with the given representation space and distance metric.

For Challenge 3, a basic rule of evaluation is that a "good" clustering result should also lead to a "good" learning result of the FL system. The widely used objective function of FL is a weighted sum loss of all clients, e.g. FedAvg\cite{mcmahan2017communication}. Therefore, the \textbf{client-specific weights} are important indicators to design clustering evaluation criteria in the FL context.

For Challenge 4, selecting clustering algorithms depends on the design of client-specific representation, distance metrics and evaluation criteria. Due to the complexity of the FL system requiring efficient communication and computation, a \textbf{simple clustering algorithms} is a preferred choice, such as K-means \cite{xie2021multi} or hierarchical clustering \cite{briggs2020federated}.

%%%%%%%%%%%%%%%%%%%%%%%%%%%%%%%%%%%%%%%%%%%%%%%%%%%%%%%%%%%%%%%%%%%%
%%% PROBLEM FORMULATION %%%
%%%%%%%%%%%%%%%%%%%%%%%%%%%%%%%%%%%%%%%%%%%%%%%%%%%%%%%%%%%%%%%%%%%%

\section{Problem Formulation}  \label{sec:problem}
An FL system is usually composed of $m$ clients where each client needs to train an intelligent task using its own dataset $D_i$. We list the FL related notations in the first part of Table~\ref{notation} while the clustering related components are introduced in the second part.

\begin{table}[tphb!]
	\vspace{-0.2cm}
	\caption{Table of partial notations}
	\label{notation}
	\vspace{-0.1cm}
	\begin{center}
		{\small
			\resizebox{1\columnwidth}{!}{%
				\begin{tabular}{p{0.15\linewidth} | p{0.15\linewidth} | p{0.7\linewidth}}
					\toprule
					% \textbf{Notation} & \textbf{Definition} \\
					Components & Notation & Definition \\
					\midrule
					\multirow{6}{*}{FL} & $m$ & Number of clients in FL system \\
					& $D_i, |D_i|$ & The dataset and its size on Client $i$  \\ 
					& $h(\omega_i), h_i$ & Hypothesis of Client $i$ with parameter $\omega_i$  \\
					& $l(h_i,D_i), l_i$ & Loss function of Client $i$\\
					& $\eta_i^{(t)}$ & The learning rate for Client $i$ in Iteration $t$\\
					& $Q$ & Number of local update steps \\
					\toprule
					\multirow{12}{*}{Clustering} & $K$ & Number of clusters \\
					& $r_{i,k} \in {\mathbb R}^{m*K}$ & The assignment matrix, $r_{i,k}=1$ if $i\in k$ else $r_{i,k}=0$ \\ 
					& $i\in k$ & Client $i$ belongs to Cluster $k$  \\
					& $g_i$ & General form to represent Client $i$ depending on $h_i$, $l_i$, $D_i$ or something else, e.g. model parameters or loss \\
					& $G_k$ & General form to represent the centroid of Cluster $k$, and usually a linear combination of $g_i$ with $ i \in k$ \\
					& $d(g_i,G_k)$ & The distance function of general representations between Client $i$ and the center of Cluster $k$, e.g. Euclidean distance.\\
					& $H(\Omega_k), H_k$ & Hypothesis of Cluster $k$  \\
					& $\mathcal{L}(H_k), \mathcal{L}_k$ & Loss function of Cluster $k$\\
					& $\lambda_{i}$ & The importance weight of Client $i$ in Cluster $k$, and $\sum_{i\in k} \lambda_{i} =1$ \\
					\bottomrule
				\end{tabular}
			}
		}
	\end{center}
	\vspace{-0.3cm}
\end{table}

In particular, we can reformulate HypCluster \cite{mansour2020three} and IFCA \cite{ghosh2020efficient} as a bi-level optimization problem:
\begin{subequations}
	\label{eq:ifca}
	\begin{align} 
		\operatorname*{minimize}_{\{h_k\}}\ &\frac{1}{m} \sum_{k=1}^{K}\sum_{i=1}^{m}r_{i,k}\mathcal{L}(H_k, D_i) \label{obj:ifca} \\
		\textit{subject to }
		& r_{i,k} = \operatorname*{argmin}_{r_{i,k}} \mathcal{L}(H_k, D_i) 
	\end{align} 
\end{subequations}

We also formulate the FeSEM \cite{xie2021multi} in a bi-level optimization framework.
\begin{subequations}
	\label{eq:fesem}
	\begin{align}
		\operatorname*{minimize}_{\{\Omega_k\}}\ &\frac{1}{m} \sum_{k=1}^{K}\sum_{i=1}^{m}r_{i,k} \mathcal{L}(\Omega_k, D_i) \\
		\textit{subject to }
		& {r_{i,k}} = \operatorname*{argmin}_{r_{i,k}} \frac{1}{m}\sum_{k=1}^{K}\sum_{i=1}^{m}r_{i,k} \| \omega_i - \Omega_k \|_2^2
	\end{align}
\end{subequations}
where $\Omega_k = \frac{1}{\sum_{i \in k} r_{i,k}} \sum_{i \in k} \omega_i$ is the centroid of the cluster $k$.

%%%%%%%%%%%%%%%%%%%%%%%%%%%%%%%%%%%%%%%%%%%%%%%%%%%%%%%%%%%%%%%%%%%%
%%% METHODOLOGY %%%
%%%%%%%%%%%%%%%%%%%%%%%%%%%%%%%%%%%%%%%%%%%%%%%%%%%%%%%%%%%%%%%%%%%%

\section{Methodology} \label{sec:methodology}

As we mentioned in Section \ref{sec:perspective}, the client-wise importance weights are important indicators for clustering to be consistent with the loss function in FL. Therefore, we design a general form of the objective function for clustered FL problem by considering weighted clustering, which is a bi-level optimization problem. The previous works could be special cases of our proposed form by weighing clients equally.
\begin{subequations}
	\label{eq:method}
	\begin{align}
		&\operatorname*{minimize}_{\{\Omega_k\}}\ \mathcal{R}=\frac{1}{\sum_{j=1}^m \lambda_j} \sum_{k=1}^{K}\sum_{i=1}^{m}r_{i,k}\lambda_{i} \mathcal{L}_k(D_i)  \label{upper} \\
		&\textit{subject to }
		{r_{i,k}}= \operatorname*{argmin}_{r_{i,k}} \mathcal{F}:\frac{1}{\sum_{j=1}^m \lambda_j}\sum_{k=1}^{K}\sum_{i=1}^{m}r_{i,k}\lambda_{i} d(g_i,G_k) \label{low}
	\end{align}
\end{subequations}
where $\lambda_i$ is the importance weight for the client $i$ in the cluster $k$.

\begin{wrapfigure}{r}{0.5\textwidth}
	\vspace{-0.3cm}
	\begin{minipage}{0.5\textwidth}
		\begin{algorithm}[H]
			\caption{\small Weighted Clustered FL (WeCFL)}
			\label{alg}
			\footnotesize
			\begin{algorithmic}
				\STATE {\bfseries Input:} $K,\{D_1,D_2,\dots,D_m\},\{l_1,l_2,\dots,l_m\}$
				\STATE {\bfseries Initialize:} Randomly select $\{H_1,H_2,\dots,H_K\}$
				\REPEAT
				\STATE {\bfseries Expectation step:} Assign Client $i$ to Cluster $k$ by $$k=\operatorname*{argmin}_k \lambda_{i} d(g_i,G_k)$$
				\STATE {\bfseries Maximization (Aggregation) step:} Compute cluster center $H_k$ by minimize $$\mathcal{F}=\frac{1}{\sum_{j=1}^m \lambda_j}\sum_{k=1}^{K}\sum_{i=1}^{m}r_{i,k}\lambda_{i} d(g_i,G_k)$$
				\STATE {\bfseries Distribution step:} Send $H_k$ to clients in Cluster k
				\STATE {\bfseries Local update step:} Run Gradient Descent $Q$ steps using local data $D_i$ to minimize $$\mathcal{R}=\frac{1}{\sum_{j=1}^m \lambda_j} \sum_{k=1}^{K}\sum_{i=1}^{m}r_{i,k}\lambda_{i} \mathcal{L}(H_k,D_i)$$
				\UNTIL{convergence condition satisfied}
				% 			\STATE {\bfseries Output:} $r_{i,k}, \{\mathcal{L}_1,\mathcal{L}_2,\dots,\mathcal{L}_K\}$
				\STATE {\bfseries Output:} $r_{i,k}, \{H_1,H_2,\dots,H_K\}$
			\end{algorithmic}
		\end{algorithm}
	\end{minipage}
	\vspace{-0.5cm}
\end{wrapfigure}

The upper-level objective \ref{upper} is an FL problem that is usually optimized by the FedAvg algorithm, whereas the lower-level objective \ref{low} is a clustering problem that is usually optimized by the EM algorithm \cite{dempster1977maximum}. It is a straightforward solution to combine these two algorithms into one and then iteratively solve the objective. 

Algorithm 1 illustrates the procedure of WeCFL to solve the proposed bi-level optimization problem in Eq. \ref{eq:method} by four main steps in every iteration. The first twp steps correspond to an EM algorithm solving the clustering problem: the E-step assigns clients to the nearest cluster and the M-step calculates the centroid of each cluster, which is equivalent to the model aggregation step of FedAvg \cite{mcmahan2017communication}. Unlike normal clustering, here the representation of each client keeps being updated by the following two steps: the server broadcasts the aggregated model for each cluster to its clients; once received the cluster model, each client applies local updates to it by minimizing the loss for its local data $D_i$ and the resulted local model is the client's new representation for the next iteration. 
%In contrast to the classical clustering problem, the client's representation is also a learning objective, thus we will send a cluster-specific shared model to the clients in the cluster, and then conduct local updates using each client's private dataset $D_i$. We repeat the procedure till convergence.

%%%%%%%%%%%%%%%%%%%%%%%%%%%%%%%%%%%%%%%%%%%%%%%%%%%%%%%%%%%%%%%%%%%%
%%% CONVERGENCE %%%
%%%%%%%%%%%%%%%%%%%%%%%%%%%%%%%%%%%%%%%%%%%%%%%%%%%%%%%%%%%%%%%%%%%%

\section{Convergence Analysis} \label{sec:convergence}
For the convergence of optimization problem \ref{eq:ifca}, which is used by HypCluster \cite{mansour2020three} and IFCA \cite{ghosh2020efficient}, the convergence is easy to analyze. We separate the algorithm into two steps: the assignment step, and the local update step. In the assignment step, it is always best to assign the least loss function to the clients, so the Objective \ref{obj:ifca} will not increase. In the local update step, which uses gradient descent algorithm, by choosing the proper learning rate under Assumption \ref{as2}, the Objective \ref{obj:ifca} will not increase either. Moreover, the Objective \ref{obj:ifca} will monotonously decrease, proving convergence.

For the convergence of Optimization Problem in Eq. \ref{eq:fesem} and \ref{eq:method}, we consider a special case of Problem in Eq. \ref{eq:method} that also covers Problem in Eq. \ref{eq:fesem}, in which the client representation $g$ is the parameter of the hypothesis of Client $i$, and the distance function is Euclidean norm square $\| \cdot \|_2^2$. then the objective function to minimize is as follows:
\begin{equation}
	\label{eq:wecfl}
	\begin{split}
		\operatorname*{minimize}_{\{\Omega_k\}}\ &\mathcal{R}=\frac{1}{\sum_{j=1}^m \lambda_j} \sum_{k=1}^{K}\sum_{i=1}^{m}r_{i,k}\lambda_{i} \mathcal{L}(\Omega_k, D_i) \\
		\textit{subject to }
		& {r_{i,k},\{\Omega_k\}}=		\operatorname*{argmin}_{r_{i,k},\{\Omega_k\}} \mathcal{F}:\frac{1}{\sum_{j=1}^m \lambda_j}\sum_{k=1}^{K}\sum_{i=1}^{m}r_{i,k}\lambda_{i} \| \omega_i - \Omega_k \|_2^2
	\end{split}
\end{equation}

% In the above, $k$ represents Cluster $k$ from $1$ to $K$. $i$ represents Client $i$ from $1$ to $m$. $\omega_i$ represents the parameter of the hypothesis of Client $i$. $\Omega_k$ represents the parameter center of Cluster $k$. 
% $\lambda_{i}= \frac{\lambda_i}{\sum_{j \in k} \lambda_j}$ represents the weight of Client i in Cluster $k$, and $\lambda_i$ is usually scaled by $1$ or $|D_i|$, in which $|D_i|$ represents the data size of Client i. Sometimes, we use $\omega$ to represent $\omega_1,\omega_2,\dots,\omega_m$, and $\Omega$ or $\{\Omega_k\}$ to represent $\Omega_1, \Omega_2, \dots, \Omega_K$ for short.

\subsection{Convergence Analysis of $\mathcal{F}$}
To analyze the convergence of the optimization problem \ref{eq:wecfl} above, both $\mathcal{F}$ and $\mathcal{R}$ should be considered. We will first analyze the clustering objective $\mathcal{F}$:

\begin{assumption}
	\label{as_bound}
	{\rm(Unbiased gradient estimator and Bounded gradients).} The expectation of stochastic gradient $\nabla l(\omega_i, \xi)$ is an unbiased estimator of the local gradient for each client:
	\begin{align*}
		{\mathbb E}_{\xi_i \sim D_i}[\nabla l(\omega_i, \xi)] = \nabla l(\omega_i)
	\end{align*}
	and expectation of L2 norm of $\nabla l(\omega_i, \xi)$ is bounded by a constant U:
	\begin{align*}
		{\mathbb E}_{\xi_i \sim D_i}[\|\nabla l(\omega_i, \xi)\|_2] \leq U
	\end{align*}
	It is also applied for $\mathcal{L}$.
\end{assumption}

\begin{theorem}
	{\rm(Convergence of clustering problem $\mathcal{F}$).} Under Assumption \ref{as_bound}, for arbitrary communication round $t$, if $\eta_i^{(t)} \leq \frac{\|  \omega_i^{(t)}-\Omega_k\|}{Q U}$, $\mathcal{F}$ converges.
	% $\{\mathcal{F}^{(t)}\}$
	\label{theo1}
\end{theorem}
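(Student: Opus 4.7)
The plan is to decompose one iteration of WeCFL into its four sub-steps (E-step, M-step, distribution, local update), track the change in $\mathcal{F}$ through each, and combine them to obtain an almost-monotone trajectory for $\{\mathcal{F}^{(t)}\}$. Since $\mathcal{F}\ge 0$ by construction, monotonicity (or a monotone-plus-summable-perturbation style argument) then yields convergence.

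For the E-step, with $\omega_i$ and $\Omega_k$ held fixed, assigning each client to $k=\operatorname*{argmin}_{k'}\lambda_i\|\omega_i-\Omega_{k'}\|^2$ is exactly the coordinate-wise minimization of $\mathcal{F}$ over the assignment matrix $r_{i,k}$, so $\mathcal{F}$ cannot increase. For the M-step, holding $\omega_i$ and $r_{i,k}$ fixed, solving $\nabla_{\Omega_k}\mathcal{F}=0$ yields the closed-form weighted centroid $\Omega_k=(\sum_{i\in k}\lambda_i\omega_i)/(\sum_{i\in k}\lambda_i)$, which is precisely the aggregation rule of WeCFL; hence $\mathcal{F}$ is again non-increasing. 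The distribution step touches no variable that appears in $\mathcal{F}$.

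The only sub-step that can increase $\mathcal{F}$ is the local update, since clients descend on $\mathcal{L}$ rather than on $\mathcal{F}$; this is the main obstacle. Here Assumption \ref{as_bound} together with the triangle inequality yields, in expectation over the mini-batches, $\|\omega_i^{(t+1)}-\omega_i^{(t)}\|\le Q\eta_i^{(t)}U$, and the hypothesis $\eta_i^{(t)}\le \|\omega_i^{(t)}-\Omega_k\|/(QU)$ forces each client to remain inside a ball around $\Omega_k$ whose radius scales with the current distance. Writing $\Delta_i=\omega_i^{(t)}-\omega_i^{(t+1)}$ and expanding $\|\omega_i^{(t+1)}-\Omega_k\|^2=\|\omega_i^{(t)}-\Omega_k\|^2-2\langle \omega_i^{(t)}-\Omega_k,\Delta_i\rangle+\|\Delta_i\|^2$, Cauchy--Schwarz on the cross term and the step-size bound on $\|\Delta_i\|$ give a per-client increase in $\|\omega_i-\Omega_k\|^2$ controlled by the \emph{current} distance-squared. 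Summing with weights $\lambda_i r_{i,k}$ yields an inequality of the form $\mathcal{F}^{(t+1)}\le c\,\mathcal{F}^{(t)}$, which combined with the strict EM decrement makes $\{\mathcal{F}^{(t)}\}$ bounded and eventually Cauchy, hence convergent.

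The hard part will be controlling the cross term $\langle \omega_i^{(t)}-\Omega_k,\Delta_i\rangle$: because the local gradient is not aligned with the clustering direction, its sign is a priori indeterminate, and the adaptive distance-scaled form of the learning rate is what prevents the local-update increment from dominating the EM decrement. Some care is also needed to remove the stochasticity of $\Delta_i$ by taking expectations against $\xi$ and invoking the unbiasedness clause of Assumption \ref{as_bound} without introducing variance terms that would spoil the contraction constant.
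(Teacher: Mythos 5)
There is a genuine gap, and it sits exactly where you flagged the difficulty. You assert that ``the distribution step touches no variable that appears in $\mathcal{F}$,'' but this is false and it is the crux of the whole argument: the distribution step sets $\omega_i \leftarrow \Omega_k$ for every client $i\in k$, and $\omega_i$ is precisely the variable that $\mathcal{F}$ depends on. Because of this reset, the local update in the paper's proof starts \emph{from the centroid} $\Omega_k$, so after $Q$ gradient steps the displacement is $\omega_i^{\mathrm{new}}-\Omega_k = -\eta_i^{(t)}\sum_{q=0}^{Q-1}\nabla l_i(\omega_i^q,D_i)$, whose norm is at most $\eta_i^{(t)} Q U$ by Assumption~\ref{as_bound}. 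The step-size condition $\eta_i^{(t)}\le \|\omega_i^{(t)}-\Omega_k\|/(QU)$ then gives $\|\omega_i^{\mathrm{new}}-\Omega_k\|^2 \le \|\omega_i^{(t)}-\Omega_k\|^2$ directly --- there is no cross term to control at all, and $\mathcal{F}$ is monotonically non-increasing through the entire round. Monotone plus nonnegative then yields convergence.

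Your version, which expands $\|\omega_i^{(t+1)}-\Omega_k\|^2$ around the \emph{old} local model $\omega_i^{(t)}$, cannot be repaired into a convergence proof. Bounding the cross term by Cauchy--Schwarz and using $\|\Delta_i\|\le \eta_i^{(t)}QU\le\|\omega_i^{(t)}-\Omega_k\|$ gives at best $\|\omega_i^{(t+1)}-\Omega_k\|^2\le 4\|\omega_i^{(t)}-\Omega_k\|^2$, i.e.\ $\mathcal{F}^{(t+1)}\le c\,\mathcal{F}^{(t)}$ with $c=4>1$. This is an expansion, not a contraction, and the ``strict EM decrement'' you invoke to absorb it is not quantified anywhere (the E- and M-steps only guarantee non-increase, not a decrease by any fixed amount), so the sequence is neither shown bounded above nor Cauchy. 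The E-step and M-step portions of your argument are correct and match the paper (coordinate-wise minimization over $r_{i,k}$, and the weighted centroid as the exact minimizer over $\Omega_k$), but the proof only closes once you use the reset $\omega_i=\Omega_k$ in the distribution step.
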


\begin{remark}
	{\rm(Clustering stability guarantee).} It is important to make sure $\mathcal{F}$ converges, which means the clustering results to be stable. We also conduct detailed experimental analysis on clustering in Section \ref{sec:clu}.
\end{remark}

\subsection{Convergence Analysis of $\mathcal{R}$}
\begin{definition}
	\label{def:clu}
	{\rm(Clusterability measure).} For arbitrary Client i in Cluster $k$, if its gradient obeys:
	\begin{equation}\label{eq:clusterability}
		\begin{split}
			\frac{ \|\sum_{p \in k} \frac{ \lambda_{p}\nabla l(\omega_p,D_p)}{\sum_{z \in k} \lambda_{z}}  - \nabla l(\omega_i,D_i) \|_2 }{ \| \sum_{p \in k} \frac{ \lambda_{p}\nabla l(\omega_p,D_p)}{\sum_{z \in k} \lambda_{z}}\|_2}
			\leq B
		\end{split}
	\end{equation}
	
	We define the clusterability of Cluster $k$ to be $B$. If $B=0$, it means the same data distribution among clients. The larger $B$, the less clusterability of Cluster $k$. It will even lead to divergence if $B$ is too large. The experimental study is also conducted for $B$ in 
	Section \ref{sec:clu}. $B$ is very small and close to zero.
\end{definition}

\begin{assumption}
	\label{as1}
	{\rm(Convex).} Each loss function $l$ or $\mathcal{L}$ is convex. Then we will have
	\begin{equation}\label{eq:convex}
		l(y)\geq l(x) + \langle \nabla l(x), y-x \rangle 
	\end{equation}
\end{assumption}

\begin{assumption}
	\label{as2}
	{\rm(Lipschitz Smooth).} Each loss function $l$ or $\mathcal{L}$ is $\beta$-smooth.  Then we will have
	\begin{equation}\label{eq:smooth}
		l(y)\leq l(x) + \langle \nabla l(x), y-x \rangle + \frac{\beta}{2} \|y-x\|_2^2 
	\end{equation}
\end{assumption}

\begin{assumption}
	\label{as3}
	%the expectation of stochastic gradient $\mathcal{L}(\omega_i, \xi)$ is:
	%\begin{align}
	%    {\mathbb E}_{\xi_i \sim D_i}[\mathcal{L}(\omega_i, \xi)] = \mathcal{L}(\omega_i)
	%\end{align}
	{\rm(Bounded gradient variance).} The variance of stochastic gradient $\nabla l(\omega_i, \xi)$ is bounded by $\sigma^2$,
	\begin{equation}
		\begin{split}
			&{\mathbb E}_{\xi_i \sim D_i}[\|\nabla l(\omega_i, \xi)-\nabla l(\omega_i)\|^2_2] \\  = &{\mathbb E}[\|\nabla l(\omega_i, \xi)\|^2_2] - \|\nabla l(\omega_i)\|^2_2 
			\leq \sigma^2
		\end{split}
	\end{equation}
	It is also applied for $\mathcal{L}$.
\end{assumption}

\begin{theorem} {\rm(Convergence of WeCFL).} Let Assumption \ref{as_bound}, \ref{as1}, \ref{as2} and \ref{as3} hold, when $\eta_{(t,q)} < min\{\frac{\| \omega_i^{(t)}-\Omega_k\|_2}{Q U},\frac{{\mathbb E}[\| \nabla \mathcal{L}(\Omega_k^{(t,M,q)})\|_2^2] -B U^2}{{\mathbb E}[\| \nabla \mathcal{L}(\Omega_k^{(t,M,q)})\|_2^2] +\sigma^2} \cdot \frac{2}{\beta}\}$, the EM loss function $\mathcal{F}$ converges, and the FL loss function $\mathcal{R}$ decreases monotonically, thus the WeCFL converges.
	\label{theo2}
\end{theorem}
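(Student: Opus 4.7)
The plan is to establish the two claims of the theorem separately, leveraging the fact that the learning-rate hypothesis is the minimum of two quantities, each tailored to one claim. The first bound $\eta_{(t,q)} \leq \|\omega_i^{(t)}-\Omega_k\|/(QU)$ is precisely the hypothesis of Theorem~\ref{theo1}, so convergence of the clustering objective $\mathcal{F}$ comes for free. All the remaining work lies in showing that, under the second bound on $\eta_{(t,q)}$, the FL objective $\mathcal{R}$ decreases monotonically in expectation from one communication round to the next; combining these two facts then yields convergence of WeCFL.

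For a single round $t$, I would unroll one aggregation-plus-local-update cycle. Starting from the centroid $\Omega_k^{(t)}$ broadcast to the clients of cluster $k$, each client $i\in k$ runs $Q$ stochastic gradient steps on $l(\cdot,D_i)$, producing iterates $\omega_i^{(t,q)}$, and the server then forms
\[
\Omega_k^{(t+1)} = \Omega_k^{(t)} - \eta\sum_{q=0}^{Q-1}\sum_{i\in k}\frac{\lambda_i}{\sum_{j\in k}\lambda_j}\,\nabla l(\omega_i^{(t,q)},\xi_i^{(t,q)}).
\]
Writing $\bar g$ for this averaged stochastic update direction, $\beta$-smoothness of $\mathcal{L}$ at $\Omega_k^{(t)}$ yields
\[
\mathbb{E}[\mathcal{L}(\Omega_k^{(t+1)})] \le \mathcal{L}(\Omega_k^{(t)}) - \eta\,\langle\nabla\mathcal{L}(\Omega_k^{(t)}),\mathbb{E}[\bar g]\rangle + \tfrac{\beta\eta^2}{2}\,\mathbb{E}\|\bar g\|_2^2.
\]
Unbiasedness (Assumption~\ref{as_bound}) turns $\mathbb{E}[\bar g]$ into the averaged deterministic gradient, and Assumption~\ref{as3} together with the uniform norm bound $U$ gives $\mathbb{E}\|\bar g\|_2^2 \le \mathbb{E}\|\nabla\mathcal{L}(\Omega_k^{(t)})\|_2^2 + \sigma^2$.

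The clusterability constant $B$ enters through the inner-product term. Since $\nabla\mathcal{L}$ at $\Omega_k^{(t)}$ equals the weighted sum of client-level gradients and Definition~\ref{def:clu} bounds each client gradient's deviation from the cluster mean in relative terms, pairing this with $\|\nabla l(\omega_i,\cdot)\|\le U$ produces an inner-product penalty of order $BU^2$. Collecting everything gives the one-round inequality
\[
\mathbb{E}[\mathcal{L}(\Omega_k^{(t+1)})] - \mathcal{L}(\Omega_k^{(t)}) \le -\eta\bigl(\mathbb{E}\|\nabla\mathcal{L}(\Omega_k^{(t)})\|_2^2 - BU^2\bigr) + \tfrac{\beta\eta^2}{2}\bigl(\mathbb{E}\|\nabla\mathcal{L}(\Omega_k^{(t)})\|_2^2 + \sigma^2\bigr),
\]
whose right-hand side is non-positive exactly when $\eta$ lies below the second bound in the theorem hypothesis. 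Summing over clusters $k$ with the weights $\lambda_i/\sum_j\lambda_j$ then gives the monotone decrease of $\mathcal{R}$.

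The hardest step, and the one where I would expect most of the bookkeeping, is the passage from the $Q$-step local-SGD trajectory to the clean one-step descent picture sketched above. The iterates $\omega_i^{(t,q)}$ drift away from $\Omega_k^{(t)}$, and both the unbiasedness rewrite and the clusterability inequality refer to gradients evaluated at these drifted iterates rather than at the centroid itself. The role of the first condition on $\eta$ is to bound this drift by $QU\eta$, which then lets one absorb the resulting perturbations of $\nabla\mathcal{L}$ into the $BU^2$ clusterability penalty and the $\sigma^2$ variance term without altering the form of the second condition. Making this absorption precise, especially when $Q>1$ and the stochastic noise accumulates across local steps, is the main technical obstacle I expect to encounter.
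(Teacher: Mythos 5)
Your high-level plan (use the first learning-rate bound to invoke Theorem~\ref{theo1} for $\mathcal{F}$, then show per-round expected decrease of $\mathcal{R}$ under the second bound) matches the paper, but your route to the per-round decrease is genuinely different from the paper's and, as sketched, has a gap in its key step. You fold the $Q$ local steps and the aggregation into a single pseudo-gradient update of the centroid, apply $\beta$-smoothness once at $\Omega_k^{(t)}$, and let $B$ enter through the inner-product misalignment term --- the standard local-SGD/FedAvg template. The paper instead cuts each round at the aggregation step: Lemma~\ref{lem5} treats each of the $Q$ local updates as a clean one-step SGD descent on $\mathcal{L}(\cdot,D_i)$ evaluated at the \emph{current} local iterate (so smoothness and Assumption~\ref{as3} apply verbatim and there is no drift to control), while Lemma~\ref{lem4} bounds the loss increase caused by averaging via \emph{convexity} plus Cauchy--Schwarz plus Definition~\ref{def:clu}, using $\|\Omega_k^{M}-\omega_i\|_2\le \eta QBU$ to get $\mathcal{R}^{M}\le\mathcal{R}^{E}+\eta BQU^2$; the E-step is handled by noting $\mathcal{R}^{(t-1,L)}=\mathcal{R}^{(t,E)}$ since reassignment relabels clients without changing their losses. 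This decomposition is what lets the clusterability price be paid exactly once per round and keeps the learning-rate threshold free of extra $Q$-dependence.

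The concrete gaps in your version: (i) the bound $\mathbb{E}\|\bar g\|_2^2\le\mathbb{E}\|\nabla\mathcal{L}(\Omega_k^{(t)})\|_2^2+\sigma^2$ is a single-step variance decomposition and fails for your $\bar g$, which is a sum of $Q$ stochastic gradients at drifted iterates (its second moment is of order $Q^2U^2$ plus cross terms); (ii) the ``inner-product penalty of order $BU^2$'' is asserted rather than derived --- Definition~\ref{def:clu} compares each client's gradient to the weighted average of client gradients at the clients' own iterates, not to $\nabla\mathcal{L}(\Omega_k^{(t)})$, so converting it into the bound you need requires exactly the drift control you defer, and the resulting constants would not reproduce the theorem's threshold without additional $Q$ factors; (iii) you never use Assumption~\ref{as1}, which is listed in the hypotheses and is essential to the paper's handling of the aggregation step; and (iv) you do not verify that the reassignment step leaves $\mathcal{R}$ unchanged. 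You correctly identify the drift bookkeeping as the obstacle, but the paper's proof works precisely because its decomposition avoids that obstacle rather than absorbing it.
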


\begin{theorem} {\rm(Convergence rate of WeCFL).} Let Assumption \ref{as_bound}, \ref{as1}, \ref{as2} and \ref{as3} hold, and $\Delta =\mathcal{R}_0-\mathcal{R}^* $, given any $\epsilon>0$, after 
	\begin{equation} \label{eq:rate}
		T \geq \frac{\Delta}{Q (\epsilon (\eta - \frac{\beta \eta^2}{2})-\frac{\beta \eta^2}{2} \sigma^2-\eta B U^2)}
	\end{equation} 
	communication rounds of WeCFL, we have 
	\begin{equation}
		\frac{1}{T Q}\sum_{k=1}^{K}\sum_{i \in k} \sum_{t=0}^{T-1} \sum_{q=0}^{Q-1} \frac{\lambda_i }{\sum_{j=1}^m \lambda_j} {\mathbb E}[\| \nabla \mathcal{L}(\Omega_k^{(t,M,q)})\|_2^2] \leq \epsilon
	\end{equation}
	% if
	% \begin{equation}
		% 	\eta_{(t)} < min\{\frac{\| \omega_i^{(t)}-\Omega_k\|}{Q U},\frac{\epsilon}{\epsilon+\sigma^2}\cdot \frac{2}{\beta}\}.
		% \end{equation} 
	\label{theo3}
\end{theorem}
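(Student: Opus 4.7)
The plan is to derive a one--local--step descent inequality for $\mathcal{R}$, telescope it across the $Q$ inner steps and the $T$ communication rounds, and then solve for the smallest $T$ that drives the averaged squared gradient below $\epsilon$. The overall shape mirrors standard nonconvex SGD analysis, with two federated twists that must be absorbed into the constants: (i) the drift of each client iterate $\omega_i$ away from the cluster centroid $\Omega_k$ over $Q$ local steps, and (ii) the systematic bias of using the client gradient $\nabla l(\omega_i, D_i)$ in place of the cluster gradient $\nabla \mathcal{L}(\Omega_k)$ that the theorem is phrased in. The clusterability parameter $B$ from Definition \ref{def:clu} is exactly what controls (ii), while the learning-rate condition $\eta \leq \|\omega_i-\Omega_k\|/(QU)$ inherited from Theorem \ref{theo2} keeps (i) manageable.

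First I would apply $\beta$-smoothness (Assumption \ref{as2}) to consecutive iterates $\Omega_k^{(t,M,q)}$ and $\Omega_k^{(t,M,q+1)}$ along the local trajectory and take conditional expectation over the mini-batch $\xi$. Assumption \ref{as_bound} replaces $\mathbb{E}[\nabla l(\omega,\xi)]$ by $\nabla l(\omega)$ in the inner-product term, and Assumption \ref{as3} expands the quadratic term into $\|\nabla l(\omega)\|_2^2 + \sigma^2$. This already produces the $-(\eta-\tfrac{\beta\eta^2}{2})\mathbb{E}[\|\nabla l\|_2^2]$ and $+\tfrac{\beta\eta^2}{2}\sigma^2$ pieces that appear in the denominator of \eqref{eq:rate}. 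I would then translate the client-level gradient $\nabla l(\omega_i, D_i)$ into the cluster-level gradient $\nabla \mathcal{L}(\Omega_k^{(t,M,q)})$ via the clusterability inequality \eqref{eq:clusterability}, using the uniform bound $\mathbb{E}[\|\nabla l\|_2]\leq U$ to convert the ratio condition into an absolute residual of size at most $BU$. Convexity (Assumption \ref{as1}) allows the substitution at the drifted local iterate without an extra error term. Taking the $\lambda_i/\sum_j \lambda_j$-weighted sum within each cluster and across clusters then yields a one-step descent of the form
\begin{equation*}
\mathbb{E}[\mathcal{R}^{(t,M,q+1)}] \leq \mathbb{E}[\mathcal{R}^{(t,M,q)}] - \Bigl(\eta - \tfrac{\beta\eta^2}{2}\Bigr)\,\mathbb{E}[\|\nabla\mathcal{L}(\Omega_k^{(t,M,q)})\|_2^2] + \tfrac{\beta\eta^2}{2}\sigma^2 + \eta B U^2.
\end{equation*}

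Finally I would telescope this inequality over $q=0,\dots,Q-1$ and $t=0,\dots,T-1$. Theorem \ref{theo1} guarantees that the E-step and aggregation step within each round do not increase $\mathcal{R}$, so the chain remains monotone across rounds. Combining the telescoped bound with $\Delta \geq \mathbb{E}[\mathcal{R}_0]-\mathbb{E}[\mathcal{R}^*]$ and dividing by $TQ$ rearranges into
\begin{equation*}
\frac{1}{TQ}\sum_{t=0}^{T-1}\sum_{q=0}^{Q-1}\sum_{k=1}^{K}\sum_{i\in k}\frac{\lambda_i}{\sum_{j=1}^m\lambda_j}\,\mathbb{E}[\|\nabla\mathcal{L}(\Omega_k^{(t,M,q)})\|_2^2] \leq \frac{\Delta/(TQ) + \tfrac{\beta\eta^2}{2}\sigma^2 + \eta B U^2}{\eta - \tfrac{\beta\eta^2}{2}},
\end{equation*}
and requiring the right-hand side to be at most $\epsilon$ gives exactly the lower bound \eqref{eq:rate} on $T$. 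The hard part will be step (ii): because the clusterability condition is stated as a \emph{ratio}, one must first split the inner product $\langle\nabla\mathcal{L}(\Omega_k),\nabla l(\omega_i,D_i)\rangle$ as $\|\nabla\mathcal{L}\|_2^2+\langle\nabla\mathcal{L},\nabla l-\nabla\mathcal{L}\rangle$, then apply \eqref{eq:clusterability} to the weighted residual, and thread the weights $\lambda_i$ and the bound $U$ so that the $\eta B U^2$ term emerges with the correct sign and magnitude; along the way the $Q$-step drift must be kept within the window set by $\eta\leq\|\omega_i-\Omega_k\|/(QU)$, so that the gradient evaluated at $\omega_i^{(t,q)}$ can be safely compared with the gradient at the cluster iterate $\Omega_k^{(t,M,q)}$.
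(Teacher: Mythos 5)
Your proposal follows essentially the same route as the paper: the one-step descent inequality you write down is exactly the per-$(t,q)$ summand of the paper's Equation (35) (obtained there by combining Lemma \ref{lem4}, which converts the clusterability ratio into the $\eta B Q U^2$ penalty via convexity, Cauchy--Schwarz, and the bound $U$, with Lemma \ref{lem5}, which gives the $-(\eta-\tfrac{\beta\eta^2}{2})\,{\mathbb E}[\|\nabla\mathcal{L}\|_2^2]+\tfrac{\beta\eta^2}{2}\sigma^2$ terms from smoothness and bounded variance), and the telescoping over $q$ and $t$ followed by rearranging for $T$ is identical. The only nit is that the across-step monotonicity you attribute to Theorem \ref{theo1} is really the identity $\mathcal{R}^{(t-1,L)}=\mathcal{R}^{(t,E)}$ plus the Lemma \ref{lem4} bound (Theorem \ref{theo1} concerns $\mathcal{F}$, not $\mathcal{R}$), but since your one-step inequality already carries the $\eta BU^2$ term this does not affect the final bound.
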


\begin{remark}
	{\rm(Linear convergence rate of WeCFL).} According to Equation \ref{eq:rate}, with proper learning rate, the convergence rate of WeCFL is $O(1/T)$, which achieve the state of the art rate such as SGD and  \cite{li2019convergence}. %It is also proved experimentally in Figure \ref{fig:cluster_test}.
\end{remark}

%%%%%%%%%%%%%%%%%%%%%%%%%%%%%%%%%%%%%%%%%%%%%%%%%%%%%%%%%%%%%%%%%%%%
%%% EXPERIMENTS %%%
%%%%%%%%%%%%%%%%%%%%%%%%%%%%%%%%%%%%%%%%%%%%%%%%%%%%%%%%%%%%%%%%%%%%

\section{Experimental settings}  \label{sec:exp-setting}
\subsection{Datasets} \label{sec:data}
%We use different combinations of datasets, Convolutional Neural Networks (CNN) \cite{lecun2015deep} and hyperparameters to achieve adequate performance.
%For the datasets, we use Fashion-MNIST \cite{xiao2017fashion}, MedMNIST \cite{yang2021medmnist}, and CIFAR-10 \cite{krizhevsky2009learning}.
%For the training model, we use a small CNN with two convolutional layers for Fashion-MNIST, CIFAR-10 and five for MedMNIST. For the optimization, SGD with the learning rate 0.001 and momentum 0.9 is used to train the model. The batch size is 64 and loss function is cross-entropy loss.
We use two benchmark datasets as below, then conduct group-wise non-IID pre-processing on them.
\begin{itemize}
	\item \textbf{Fashion-MNIST} \cite{xiao2017fashion} consists of 70,000 28x28 grayscale images in 10 classes, with 60,000 training images and 10000 test images under the MIT License.
	% \item \textbf{MedMNIST} \cite{yang2021medmnist} is for biomedical image classification including 12 datasets for 2D and 6 datasets for 3D under Apache License 2.0. And we use the 2D OCTMNIST dataset generated by retinal OCT images. It has 4 classes and consists of 97477, 10832 and 1000 images for training, validation and test, respectively.
	\item \textbf{CIFAR-10} \cite{krizhevsky2009learning} provides 60,000 32x32 colour images in 10 classes, with 6,000 images per class under the MIT License. There are 50,000 training images and 10,000 test images. The heterogeneity of the CIFAR-10 dataset is much higher than MNIST family datasets.
\end{itemize}

%\textbf{Client-wise non-IID Setting} We use two methods to simulate the client-wise non-IID setting. The first method uses Dirichlet distribution to generate the non-IID datasets across clients \cite{hsu2019measuring}. Clients have different dataset sizes and distributions of classes. The hyperparameter $\alpha$ of Dirichlet distribution is used to control the degree of non-IID. The smaller the $\alpha$, the greater the degree of non-IID.

%The second method, called $n$-class, is naive. This method allocates every client $n$ classes out of all classes of the datasets, e.g., Fashion-MNIST, CIFAR-10, and we use $n=2$ for the client-wise non-IID setting. Compared to the first method, this one has fewer but more balanced classes, which means different clients have more diverse features, but the dataset sizes are similar.

The first \textbf{cluster-wise non-IID pre-processing} method is using Dirichlet distribution to control the randomness of non-IID \cite{hsu2019measuring}. Specifically, we divide the dataset into $K=10$ clusters with $\alpha=0.1$ to generate large variance on cluster-wise non-IID, and then we divide each cluster into $m/K$ clients with $\alpha=10$ to control client-wise non-IID.

The second \textbf{cluster-wise non-IID pre-processing} method is $n$-class proposed by FedAvg \cite{mcmahan2017communication} that is to assign $n$ classes out of all classes in the dataset. We randomly assign $3$ classes to each cluster with a relatively balanced number of instances per class, and then assign $2$ classes to each client.

\subsection{Baseline and system settings}
\textbf{Baseline} For single model-based FL, we choose FedAvg \cite{mcmahan2017communication} and FedProx \cite{li2020federated} with $\lambda = 0.95$ as the baselines. %We also use Ditto \cite{li2021ditto} with $\lambda = 0.1$ as a baseline for Personalized FL.
For clustered FL methods, FeSEM \cite{xie2021multi} and IFCA \cite{ghosh2020efficient} which is simlilar to HypCluster are chosen as the baselines. We also propose FedAvg+ and FedProx+ by training FedAvg and FedProx $K$ times, and then learn an ensemble model via soft voting to serve all clients.

\textbf{System settings} We generate 200 clients for simulating a relatively large-scale FL system. We use CNN \cite{lecun2015deep} as the basic model for each client. We evaluate the performance using both \textbf{micro accuracy} (\%) and \textbf{macro F1-score} on the client-wise test datasets due to high non-IID degrees. The standard deviation has been estimated for five times of experiments with different random seeds, and the mean is obtained by the last three rounds out of the total 100 communication rounds. More details of setting could be found in the Appendix.

%The local steps are 10, and we run only 100 global communication rounds. For the clustering process, we use flattened parameters of the fully-connected layers of CNN as data points and K-Means as the clustering algorithm.

%The entire project is coded in a framework called FedBase which can be found in the supplementary material, and trained in a personal laptop with GPU Quadro RTX 3000 of 6 GB memory.

\begin{table}[H]
	\vspace{-0.3cm}
	\caption{Performance comparison on cluster-wise non-IID }
	\label{tab:cluster}
	\begin{center}
		\resizebox{1.0\columnwidth}{!}{%
			\begin{tabular}{ll|cccc|cccc}
				\toprule
				\multicolumn{2}{c}{Datasets} & \multicolumn{4}{c}{Fashion-MNIST} & \multicolumn{4}{c}{CIFAR-10} \\
				
				\midrule
				\midrule
				
				\multicolumn{2}{c}{Non-IID setting} & \multicolumn{2}{c}{$\alpha=(0.1,10)$} & \multicolumn{2}{c}{$(3,2)-$class} & \multicolumn{2}{c}{$\alpha=(0.1,10)$} & \multicolumn{2}{c}{$(3,2)-$class}  \\
				
				\midrule
				
				\textbf{K} & Methods & Accuracy & Macro-F1 & Accuracy & Macro-F1  & Accuracy & Macro-F1 & Accuracy & Macro-F1 \\
				
				\midrule
				
				\multirow{3}{*}{\textbf{1}} & FedAvg & 86.08$\pm$0.70 & 57.24$\pm$2.26 & 86.33$\pm$0.44 & 46.09$\pm$1.08  & 24.38$\pm$3.30  & 11.69$\pm$3.15  & 21.33$\pm$3.83 & 9.0$\pm$0.58           \\
				
				& FedProx &  86.32$\pm$0.78 & 58.03$\pm$3.19 & 86.42$\pm$0.63 & 45.86$\pm$1.42   & 24.73$\pm$3.68  & 11.28$\pm$2.35 & 22.66$\pm$1.13 & 9.23$\pm$0.78     \\ 
				
				%& Ditto & 93.14$\pm$0.89 & 85.52$\pm$1.41 & 92.39$\pm$1.40 & 85.22$\pm$0.59 &85.91$\pm$0.23 &77.98$\pm$0.44 & 82.66$\pm$0.23& 67.06$\pm$0.36 & 66.83$\pm$4.67  & 34.51$\pm$1.77 & 64.57$\pm$1.57 & 47.93$\pm$1.32  \\ 
				
				\midrule
				\multirow{5}{*}{\textbf{5}} & FedAvg+    &   87.61 & 59.48    & 86.95   & 65.61   &  25.97  & 12.16  &        24.35   &  9.06            \\ 
				
				& FedProx+ & 87.94 & 59.83 & 86.52 & 65.73  & 26.05 & 12.53   &   24.83 & 9.31      \\ 
				
				& IFCA & 84.60$\pm$2.22 & 62.03$\pm$3.01 & 84.94$\pm$2.54 & 66.50$\pm$4.43  & 34.1$\pm$4.79  & 22.12$\pm$2.21 & 29.80$\pm$4.49 & 17.90$\pm$2.08   \\  
				
				& FeSEM & 94.64$\pm$1.54 & 82.90$\pm$2.38 & 94.20$\pm$1.96 & 77.07$\pm$6.05 & 59.06$\pm$3.24 & \textbf{32.33$\pm$7.25} & 58.76$\pm$3.35 & 35.75$\pm$2.54 \\
				
				& WeCFL & \textbf{94.64$\pm$1.02} & \textbf{84.4$\pm$1.31} & \textbf{94.97$\pm$1.43} & \textbf{77.36$\pm$3.94} & \textbf{59.26$\pm$3.32} & 32.26$\pm$3.46 & \textbf{62.44$\pm$2.53} & \textbf{38.55$\pm$1.76} \\
				
				\midrule
				\multirow{5}{*}{\textbf{10}} & FedAvg+    &   89.42 & 67.83    & 86.91    &  63.01      &   28.45  & 13.79  &        27.28  &  9.81           \\ 
				
				& FedProx+ & 89.55 & 68.02 &  86.73  & 63.42   & 28.33  & 13.64   &   26.94 & 9.64      \\ 
				
				& IFCA  &  82.10$\pm$5.40  & 62.62$\pm$8.22   &86.58$\pm$4.97 & 66.22$\pm$5.69    &   34.84$\pm$5.82 &22.76$\pm$3.99    &  34.06$\pm$2.60  & 18.7$\pm$1.31 \\ 
				
				& FeSEM  &  95.73$\pm$1.28  & 89.34$\pm$1.57   & 95.54$\pm$0.74 & 84.43$\pm$2.38  &  66.89$\pm$2.18 & 38.35$\pm$4.24  &    71.76$\pm$2.23 & 49.72$\pm$3.84 \\ 
				
				& WeCFL &  \textbf{95.88$\pm$0.85}   & \textbf{89.81$\pm$1.59}  & \textbf{97.10$\pm$0.51} & \textbf{88.96$\pm$1.36}  & \textbf{70.95$\pm$3.57} & \textbf{40.19$\pm$2.88} & \textbf{72.13$\pm$1.88} & \textbf{50.65$\pm$2.15}\\ 
				\bottomrule
			\end{tabular}
		}
	\end{center}
	\vspace{-0.5cm}
\end{table}

\section{Experimental analysis} \label{sec:exp-analysis}
\subsection{Comparison study}
% \begin{wrapfigure}{R}{0.5\textwidth}
	% \vspace{-1.5cm}
	% 	\centering
	% 	\includegraphics[width=0.48\textwidth]{}
	%     \caption{Test accuracy on Fashion-MNIST in 3,2-class cluster-wise non-IID setting.}
	% 	\label{fig:cluster_test}
	% \vspace{-1cm}
	% \end{wrapfigure}
\textbf{Table \ref{tab:cluster}} shows performance comparison on cluster-wise non-IID setting. Measured by client-wise test dataset-based micro accuracy and macro F1-score, WeCFL outperforms almost all baselines on Fashion-MNIST and CIFAR-10 datasets. IFCA also doesn't show a very competitive performance on both two datasets. One of the main reasons is due to IFCA's unstable clustering capability. IFCA's clustering procedure is not a usual clustering algorithm with a well-defined distance or similarity metric. Specifically, in IFCA's clustering procedure, the similarity metric is based on how the cluster-specific model performs on the client's local dataset. This kind of metric is unlike other classic distance and similarity metrics which have demonstrated good characteristics from geometry and algebra perspectives. 

Within a proper interval, larger $K$ leads to better performance. As shown in the figure, when K is increased from 5 to 10, all methods' performance is increased. However, IFCA sometimes decreases its performance due to its unstable clustering capability. The FedAvg and FedProx perform very badly on CIFAR-10 that demonstrating their inability to tackle group-wise non-IID data. Their ensemble extension, FedAvg+ and FedProx+, can slightly increase the performance because the model's generalization has been improved by leveraging ensemble learning. It is noteworthy that FedAvg+ and FedProx+ are very stable by assembling multiple models; thus we didn't measure the variance of these ensemble models.

\subsection{Convergence analysis}

\textbf{Figure \ref{fig:convergence-cfl}} shows the convergence curves of three clustered FL methods including IFCA, FeSEM and WeCFL. Two figures measure performances on test accuracy and macro F1, respectively. The experimental dataset is derived from CIFAR-10 by preprocessing the dataset with a cluster-wise non-IID setting. Specifically, the non-IID of (3,2)-class that assigns three classes to each cluster while assigning two classes to each client. As shown in the figures, WeCFL converges faster than others.

\begin{figure}[H]
	\vspace{-0.4cm}
	\centering
	\begin{minipage}{0.48\textwidth}
		\centering
		\includegraphics[width=1\textwidth]{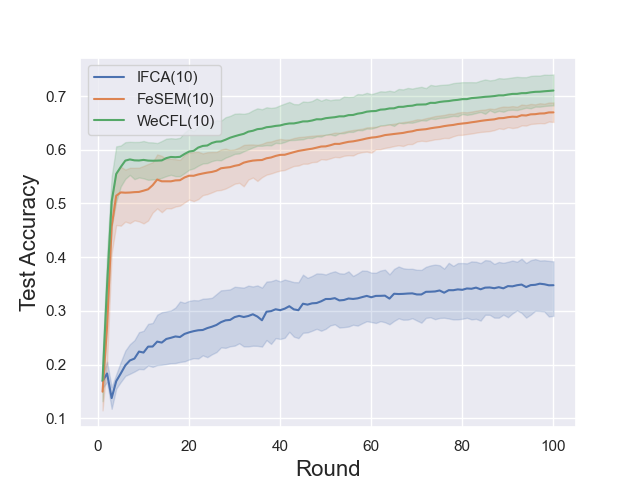} % first figure itself
		\vspace{-0.2cm}
		\label{fig:acc}
	\end{minipage}\hfill
	\begin{minipage}{0.48\textwidth}
		\centering
		\includegraphics[width=1\textwidth]{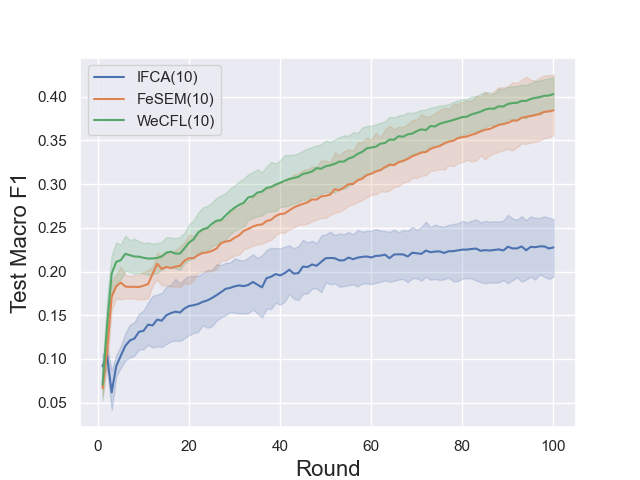} % second figure itself
		\vspace{-0.3cm}
		\label{fig:f1}
	\end{minipage}
	\caption{\small Convergence of \textbf{clustered FL} methods on \textbf{CIFAR-10} under the \textbf{(3,2)-class} non-IID setting}
	\vspace{-0.5cm}
	\label{fig:convergence-cfl}
\end{figure}

\begin{figure}[H]
	\vspace{-0.4cm}
	\centering
	\begin{minipage}{0.48\textwidth}
		\centering
		\includegraphics[width=1\textwidth]{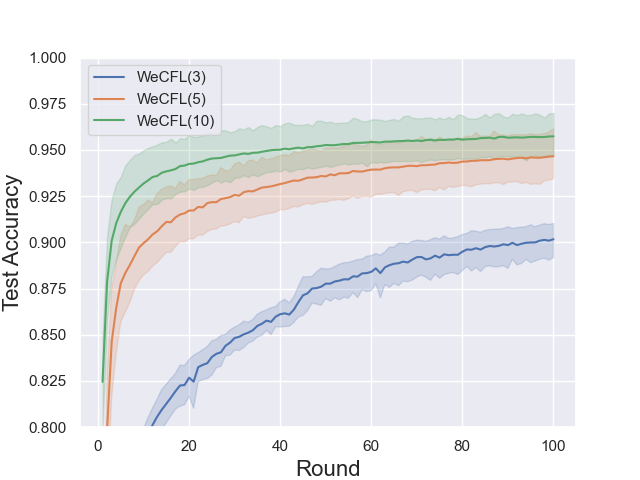} % first figure itself
		\vspace{-0.5cm}
	\end{minipage}\hfill
	\begin{minipage}{0.48\textwidth}
		\centering
		\includegraphics[width=1\textwidth]{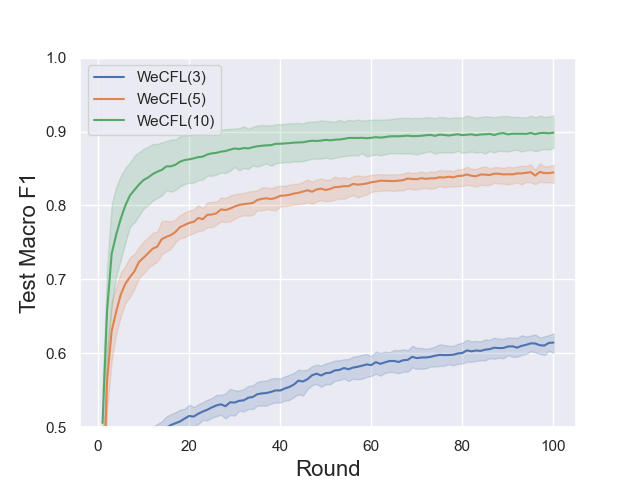} % second figure itself
		\vspace{-0.3cm}
		\label{fig:f1_k}
	\end{minipage}
	\caption{\small Convergence of \textbf{WeCFL} on \textbf{Fashion-MNIST} under the $\boldsymbol{\alpha=(0.1,10)}$ non-IID setting}
	\vspace{-0.5cm}
	\label{fig:convergence-wecfl}
\end{figure}

%\textbf{Client-wise non-IID Setting} Please refer to appendix in the supplement.

Figure \ref{fig:convergence-wecfl} demonstrates that WeCFL can convergence in different K. The experimental dataset is derived from Fashion-MNIST using the Dirichlet-based group-wise non-IID pre-processing method with $\alpha={0.1,10}$. Specifically, we use a Dirichlet distribution with $\alpha=0.1$ to control the inter-cluster non-IID with large variance, and then use another Dirichlet distribution with $\alpha=10$ to control intra-cluster client-wise non-IID with small variance. The figures demonstrate that a larger $K$ is more likely to lead to better performance on both test accuracy and macro F1 score.

\subsection{Clustering study}\label{sec:clu}

\textbf{Clustering evaluation} A good clustering generally satisfies two evaluation criteria: the clients in the cluster are similar to each other, and the clusters are dissimilar to each other. We use cosine similarity to measure the difference among clients or clusters generated by WeCFL. Figure \ref{similarity} visualizes the inter-cluster and intra-cluster similarities. Specifically, the left figure shows the similarity among 10 clusters' centroids, and its similarity value is around 0.93 that indicating a big difference among the clusters. The right figure is the similarity among 20 intra-cluster clients that all of them are bigger than 0.999. In summary, Figure \ref{similarity} demonstrates that WeCFL can distinct clusters (left figure) and group similar clients into the same cluster (right figure).

%The clusterability measure $B$ is also studied, which is very small and close to zero.

\begin{figure}[h]
	% 	\vskip 0.2in
	\vspace{-0.2cm}
	\begin{center}
		\centerline{\includegraphics[width=0.8\columnwidth]{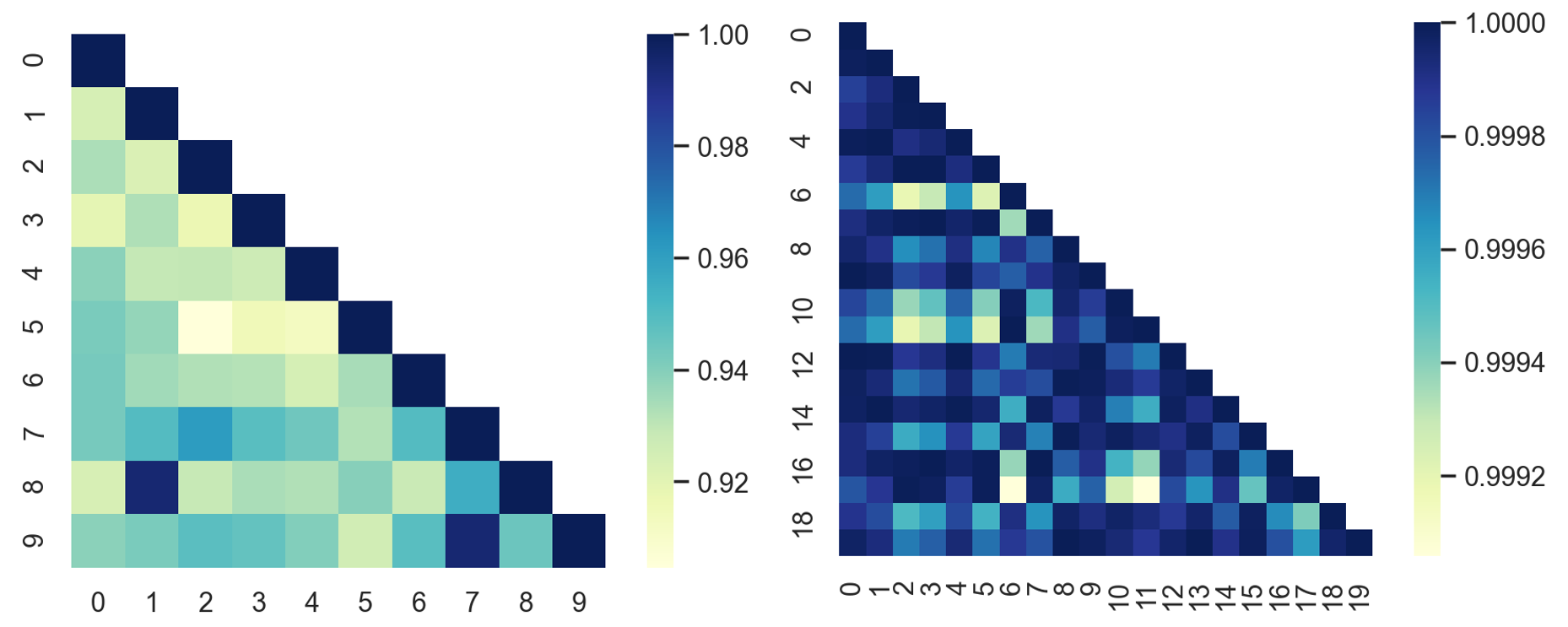}}
		\caption{Cosine similarity heatmap of 10 clusters' centroids (left) and 20 clients in a cluster (right).}
		\label{similarity}
	\end{center}
	%	\vskip -0.2in
	\vspace{-0.5cm}
\end{figure}

\textbf{Clustering visualization}
To verify the effectiveness of the proposed WeCFL method and whether the clients are clustered properly, we visualize the clustering results using t-SNE \cite{tang2016visualizing} to transform client-wise representations into two-dimensional vectors. All clustering results are generated by WeCFL. As shown in Figure ~\ref{fig:tsne}, it is obvious clusters are distinguishable from each other, which indicates that the clustering results are learned perfectly. The highly-dense clusters of markers also indicate that $B$ in Eq. \ref{eq:clusterability} is very small, which can also be verified in values. It is also worth noting that the clustering algorithm converges very fast. In general, it takes no more than 10 communication rounds to achieve convergence on clustering (more details in Appendix). Once clustering converges, the operations on later communication rounds are equivalent to conducting a cluster-specific FedAvg.

% \begin{figure}[h]
	% % 	\vskip 0.2in
	%  \vspace{-0.2cm}
	% 	\begin{center}
		% 		\centerline{\includegraphics[width=0.8\columnwidth]{}}
		% 		\caption{Visualisation of clustering results on Fashion-MNIST (left) and CIFAR-10 (right) under group-wise non-IID with $\alpha=(0.1,10)$}
		% 		\label{fig:visualisation}
		% 	\end{center}
	% %	\vskip -0.2in
	%  \vspace{-0.5cm}
	% \end{figure}

\begin{figure}[h]
	\vspace{-0.4cm}
	\centering
	\begin{minipage}{0.48\textwidth}
		\centering
		\includegraphics[width=1\textwidth]{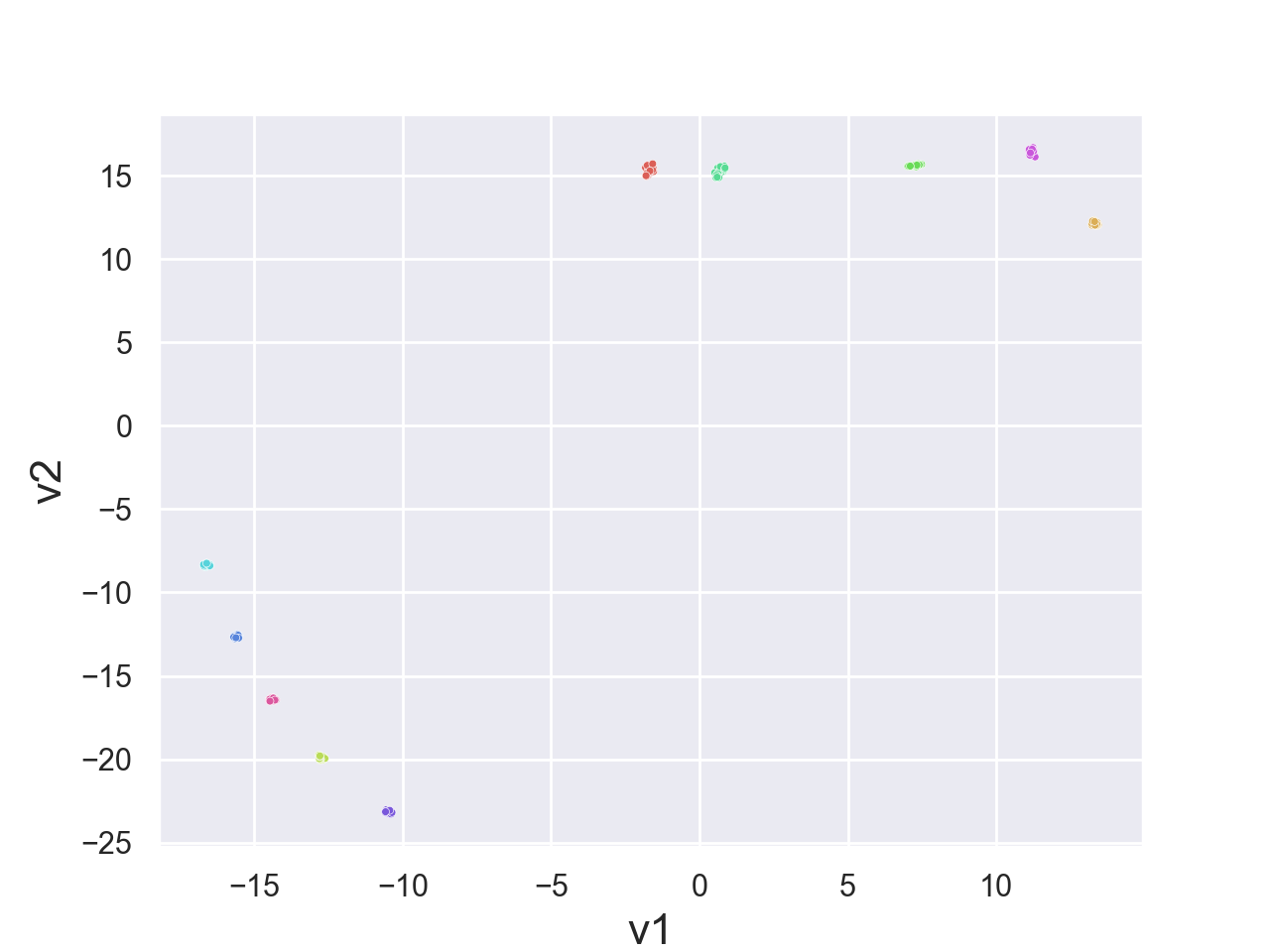} % first figure itself
		\vspace{-0.5cm}
	\end{minipage}\hfill
	\begin{minipage}{0.48\textwidth}
		\centering
		\includegraphics[width=1\textwidth]{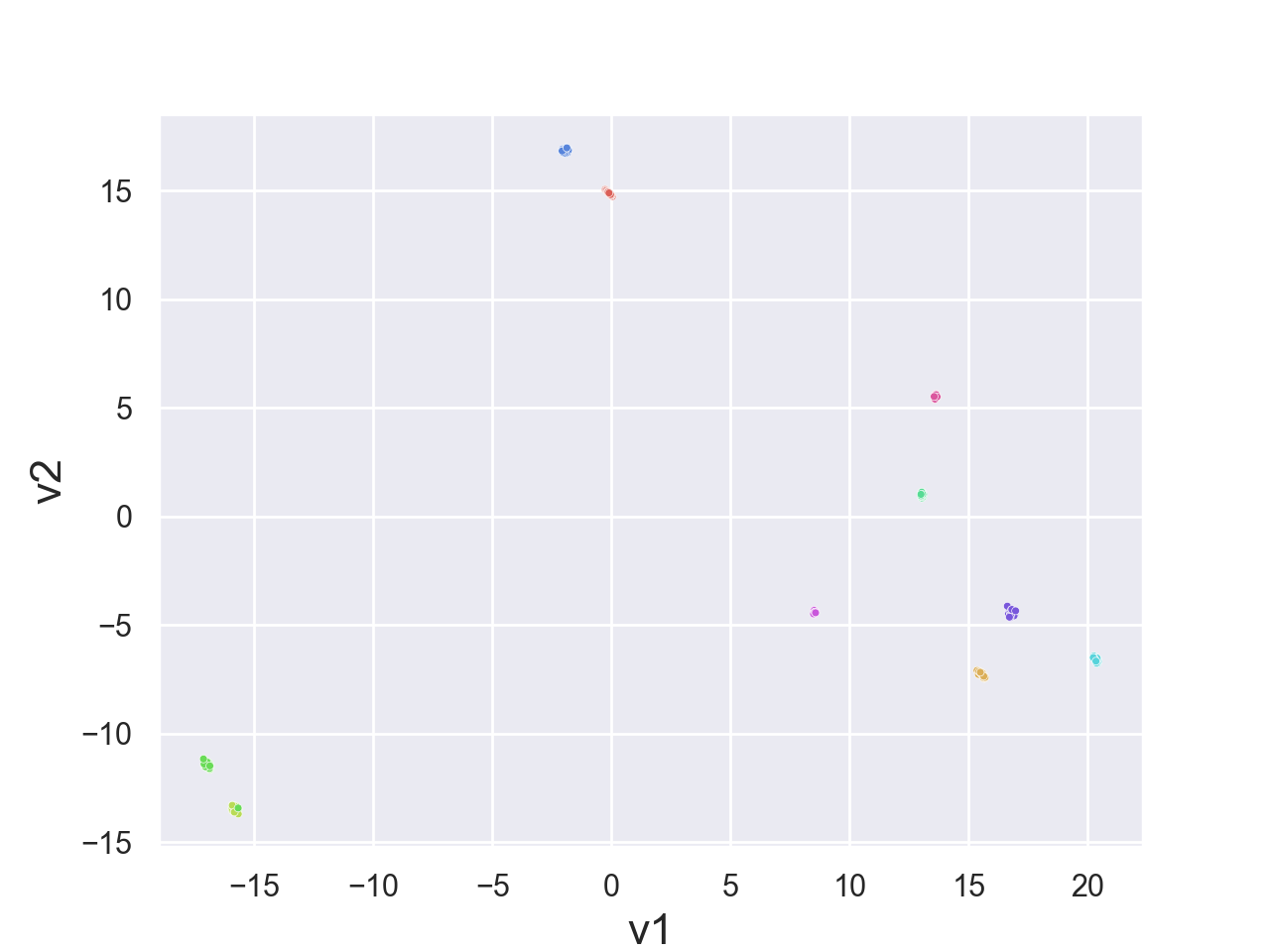} % second figure itself
		\vspace{-0.3cm}
	\end{minipage}
	\caption{\small T-SNE visualization \cite{tang2016visualizing} of clustering results on the Fashion-MNIST (left) and CIFAR-10 (right) under the $\alpha=(0.1,10)$ cluster-wise non-IID setting, generated by 200 clients across ten clusters after only five communication rounds before server aggregation. Different colors represent different cluster labels.}
	\vspace{-0.5cm}
	\label{fig:tsne}
\end{figure}

%%%%%%%%%%%%%%%%%%%%%%%%%%%%%%%%%%%%%%%%%%%%%%%%%%%%%%%%%%%%%%%%%%%%
%%% CONCLUSION %%%
%%%%%%%%%%%%%%%%%%%%%%%%%%%%%%%%%%%%%%%%%%%%%%%%%%%%%%%%%%%%%%%%%%%%

\section{Conclusion}  \label{sec:conclusion}
This work rethinks the clustered FL from a new perspective on clustering, and then proposes a general form for clustered FL. A weighted clustering has been applied to clustered FL. The most important contribution is proposing a new convergence analysis to the general form of clustered FL. Experiments on both cluster-wise non-IID settings can support our claims.

% {\small
	% \bibliographystyle{unsrt}
	% \bibliography{reference}}

\printbibliography

\pagebreak{}

\appendix

\section{Proof of Convergence Analysis}
There are more notations to be defined in the subsequent proof. Superscripts $E, M, D, L$ represent the four steps in WeCFL. For example, $\mathcal{F}^{(t,L)}$ represents $\mathcal{F}$ in the local update step of round t. And $q$ represents the $q$-th step in local update with total local steps $Q$.
\subsection{Proof of Theorem \ref{theo1}}
\begin{lemma}\label{l1}
	In the Expecation step of communication round t+1, fix $\omega, \Omega$, and assign $r_{i,k}=1$ if
	$$
	k=\operatorname*{argmin}_k \| \omega_i-\Omega_k\|_2^2
	$$
	then we can prove that:
	\begin{equation}
		\mathcal{F}^{(t+1,E)}\leq \mathcal{F}^{(t,L)}
	\end{equation}
\end{lemma}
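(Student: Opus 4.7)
The plan is to exploit the fact that $\mathcal{F}$ decomposes as a sum of non-negative per-client contributions, and that in the E-step only the assignment matrix $r_{i,k}$ is changing while $\omega_i$ and $\Omega_k$ are held fixed at their values at the end of the local update step of round $t$. So the argument reduces to a per-client minimization, essentially the standard k-means monotonicity argument.

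Concretely, I would first rewrite
\begin{equation*}
\mathcal{F}(r;\omega,\Omega)=\frac{1}{\sum_{j=1}^m \lambda_j}\sum_{i=1}^{m}\lambda_i \sum_{k=1}^{K} r_{i,k}\|\omega_i-\Omega_k\|_2^2,
\end{equation*}
and note that since the assignment is hard (exactly one $k$ has $r_{i,k}=1$), the inner sum over $k$ equals $\|\omega_i-\Omega_{k(i)}\|_2^2$, where $k(i)$ is the index assigned to client $i$. Thus
\begin{equation*}
\mathcal{F}(r;\omega,\Omega)=\frac{1}{\sum_{j=1}^m \lambda_j}\sum_{i=1}^{m}\lambda_i \|\omega_i-\Omega_{k(i)}\|_2^2.
\end{equation*}

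Next, I would apply this decomposition with $\omega=\omega^{(t,L)}$ and $\Omega=\Omega^{(t,L)}$ to both $\mathcal{F}^{(t,L)}$ (using the old assignment $k_{\text{old}}(i)$ carried into round $t+1$) and $\mathcal{F}^{(t+1,E)}$ (using the new assignment $k_{\text{new}}(i)=\arg\min_k \|\omega_i-\Omega_k\|_2^2$). By the definition of $k_{\text{new}}(i)$,
\begin{equation*}
\|\omega_i-\Omega_{k_{\text{new}}(i)}\|_2^2 \;\leq\; \|\omega_i-\Omega_{k_{\text{old}}(i)}\|_2^2
\end{equation*}
for every client $i$. Since $\lambda_i\geq 0$ and $\sum_j\lambda_j>0$, multiplying by $\lambda_i$ and summing over $i$ preserves the inequality, giving $\mathcal{F}^{(t+1,E)}\leq \mathcal{F}^{(t,L)}$.

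There is no real obstacle here; the only thing to be careful about is bookkeeping, namely clarifying that the E-step does not touch $\omega_i$ or $\Omega_k$ so that all the quantities compared are evaluated at the same $(\omega,\Omega)=(\omega^{(t,L)},\Omega^{(t,L)})$, and that the weighted objective still reduces per-client because the weights $\lambda_i$ are independent of the cluster index $k$. Both points are immediate from the statement of the algorithm and the form of $\mathcal{F}$ in Equation~\ref{eq:wecfl}, so the proof will be short.
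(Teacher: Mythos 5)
Your argument is correct and is essentially the same as the paper's: the per-client reduction of $\mathcal{F}$ to $\lambda_i\|\omega_i-\Omega_{k(i)}\|_2^2$, the pointwise inequality from the argmin assignment, and summation over clients. Your write-up just makes the bookkeeping (that $\omega$ and $\Omega$ are frozen during the E-step) more explicit than the paper does.
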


\begin{proof}
	$r^{(t+1)}_{i,k}=1$ is to find the right k for Client i to minimize $\| \omega_i-\Omega_k\|_2$, which means to find the shortest Euclidean distance from each $\Omega_1, \Omega_2, \dots, \Omega_K$ to $\omega_i$, so for every i,
	$$
	\lambda_{i} \| \omega_i - \Omega_k^{(t+1,E)} \|_2^2 \leq \lambda_{i} \| \omega_i - \Omega_k^{(t,L)} \|_2^2
	$$
	then sum it with from $i = 1$ to $m$, we can easily get:
	$$
	% \mathcal{F}(r^{(t+1)};\omega;\Omega)\leq \mathcal{F}(r^{(t)};\omega;\Omega)
	\mathcal{F}^{(t+1,E)}\leq \mathcal{F}^{(t,L)}
	$$
\end{proof}

\begin{lemma}\label{l2}
	In the Maximization step of communicaiton round t, fix $r,\omega$, define:
	\begin{equation}
		\Omega^{(t,M)}_k =\sum_{i \in k} \frac{ \lambda_{i}}{\sum_{j \in k} \lambda_{j}} \omega_i
	\end{equation}
	we can prove that:
	\begin{equation}
		\mathcal{F}^{(t,M)}\leq \mathcal{F}^{(t,E)}
		%   \mathcal{F}(r;\omega;\Omega^{(t+1)}) \leq \mathcal{F}(r;\omega;\Omega^{(t)}})
\end{equation}
\end{lemma}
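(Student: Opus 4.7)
The plan is to establish that the weighted mean $\Omega_k^{(t,M)}$ is the unique minimizer, for each cluster $k$, of the cluster-wise quadratic $\Omega \mapsto \sum_{i \in k} \lambda_i \|\omega_i - \Omega\|_2^2$. Since $r_{i,k}$ and $\{\omega_i\}$ are held fixed throughout the Maximization step, and only $\{\Omega_k\}$ changes from the E-step to the M-step, the objective $\mathcal{F}$ decomposes cluster-wise (up to the common normalizing factor $1/\sum_j \lambda_j$), and the $\Omega_k$'s in different clusters decouple. Thus it suffices to prove the claim one cluster at a time and then sum.

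First I would fix a cluster $k$ and define $f_k(\Omega) = \sum_{i \in k} \lambda_i \|\omega_i - \Omega\|_2^2$. This is a strictly convex quadratic in $\Omega$, since each summand is a strictly convex quadratic with positive weight $\lambda_i$. Computing the gradient gives $\nabla f_k(\Omega) = -2 \sum_{i \in k} \lambda_i (\omega_i - \Omega)$, and setting it to zero yields
\begin{equation*}
\Omega = \frac{\sum_{i \in k} \lambda_i \omega_i}{\sum_{j \in k} \lambda_j},
\end{equation*}
which is exactly the definition of $\Omega_k^{(t,M)}$. By strict convexity this stationary point is the unique global minimizer, so $f_k(\Omega_k^{(t,M)}) \leq f_k(\Omega_k^{(t,E)})$ for every cluster $k$. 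Summing this inequality over $k$ and multiplying by $1/\sum_j \lambda_j$ recovers $\mathcal{F}^{(t,M)} \leq \mathcal{F}^{(t,E)}$; the indicator form with $r_{i,k}$ is equivalent since under the fixed assignment $r_{i,k}=1$ precisely when $i \in k$.

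The main obstacle here is essentially nonexistent, as this is the classical weighted-mean optimality for squared Euclidean distance, so the argument is mostly bookkeeping. The only care needed is to confirm that the cluster-wise decomposition is legitimate under the fixed $r$ and that the normalizing constant $1/\sum_j \lambda_j$ is identical on both sides and therefore cancels cleanly. An alternative, computation-free route is to complete the square: for any $\Omega$ one has the identity
\begin{equation*}
\sum_{i \in k} \lambda_i \|\omega_i - \Omega\|_2^2 = \sum_{i \in k} \lambda_i \|\omega_i - \Omega_k^{(t,M)}\|_2^2 + \Bigl(\sum_{j \in k} \lambda_j\Bigr) \|\Omega_k^{(t,M)} - \Omega\|_2^2,
\end{equation*}
which exhibits $\Omega_k^{(t,M)}$ as the minimizer and also quantifies the gap, a fact that will likely be convenient when combining this lemma with Lemma \ref{l1} to conclude the proof of Theorem \ref{theo1}.
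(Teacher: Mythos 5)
Your proof is correct and rests on the same idea as the paper's: the weighted mean $\Omega_k^{(t,M)}$ minimizes the cluster-wise weighted sum of squared distances, so the objective can only decrease cluster by cluster. The paper's own argument is precisely the complete-the-square identity you offer as an alternative at the end (it expands $\|\omega_i-\Omega_k^{(t,E)}\|_2^2$ around $\Omega_k^{(t,M)}$ and observes that the cross term vanishes), so the two routes are essentially the same, with your first-order-condition version being an equivalent repackaging.
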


\begin{proof}
For an arbitrary Client i in Cluster k, the loss square is :
\begin{equation}
	\begin{split}
		\lambda_{i}\|  \omega_i-\Omega_k^{(t,E)}\|_2^2 &= \lambda_{i}\| \omega_i-\Omega^{(t,M)}_k+\Omega^{(t,M)}_k-\Omega_k^{(t,E)}\|_2^2\\
		&=\lambda_{i}\|  \omega_i-\Omega^{(t,M)}_k\|_2^2+\lambda_{i} \| \Omega^{(t,M)}_k-\Omega_k^{(t,E)}\|_2^2\\
		&\quad+2 \lambda_{i}\langle\, \omega_i-\sum_{i \in k} \frac{ \lambda_{i}}{\sum_{j \in k} \lambda_{j}} \omega_i,\sum_{i \in k} \frac{ \lambda_{i}}{\sum_{j \in k} \lambda_{j}} \omega_i-\Omega_k^{(t,E)}\rangle\\
	\end{split}
\end{equation}
then sum all the clients in Cluster k together:
\begin{equation}
	\begin{split}
		\sum_{i \in k}\lambda_{i}\|  \omega_i-\Omega_k^{(t,E)}\|_2^2 & =
		\sum_{i \in k}\lambda_{i}\|  \omega_i-\Omega^{(t,M)}_k\|_2^2+ \sum_{i \in k}\lambda_{i} \| \Omega^{(t,M)}_k-\Omega_k^{(t,E)}\|_2^2\\
		&+2\langle\,\sum_{i \in k}\lambda_{i} \omega_i-\sum_{i \in k}\lambda_{i} \sum_{i \in k} \frac{ \lambda_{i}}{\sum_{j \in k} \lambda_{j}} \omega_i,\sum_{i \in k} \frac{ \lambda_{i}}{\sum_{j \in k} \lambda_{j}} \omega_i-\Omega_k^{(t,E)}\rangle\\
		& =\sum_{i \in k}\lambda_{i}\|  \omega_i-\Omega^{(t,M)}_k\|_2^2+ \sum_{i \in k}\lambda_{i} \| \Omega^{(t,M)}_k-\Omega_k^{(t,E)}\|_2^2
	\end{split}
\end{equation}
So sum all loss functions of all clusters, we can get:
\begin{equation}
	\mathcal{F}^{(t,M)} - \mathcal{F}^{(t,E)} =- \frac{1}{\sum_{j=1}^m \lambda_j}\sum_{k=1}^{K} \sum_{i \in k}\lambda_{i} \| \Omega^{(t,M)}_k-\Omega_k^{(t,E)}\|_2^2 \leq 0
\end{equation}

\end{proof}

% \begin{assumption}
%     the loss function of model $f: {\mathbb R}^d \to {\mathbb R}$  is L-Lipschitz if for all $x,y \in {\mathbb R}^d$, then:
%     \begin{equation}
	%         \| \nabla l_i(x)-\nabla l_i(y)\|_2 \leq L\| x-y\|_2
	%     \end{equation}
% \end{assumption}
%\begin{assumption}
%\label{as4}
%Uniformly bounded expecation of graidents.
%\end{assumption}

\begin{lemma}\label{l3}
Under Assumption \ref{as_bound}, in the Distribution step of communicaiton round t+1, we get $\omega_{i \in k}=\Omega_k$. In the Local update step of communicaiton round t+1, fix $r;\Omega$, after Q steps, define:
\begin{equation}
	% \omega_i=\Omega_k
	\omega_i^1=\omega_i^0-\eta_i^{(t)}*\nabla l_i(\omega_i^0,D_i),\dots
\end{equation}
So
\begin{equation}\label{eq:wi}
	\begin{split}
		\omega_i^{(n+1)}&=\Omega_k-\eta_i^{(t)} \nabla l_i(\omega_i^0,D_i)-\dots-\eta_i^{(t)} \nabla l_i(\omega_i^{Q-1},D_i) \\
		% &=\frac{1}{\lambda_{i} \#k_i}(\Omega_k-\eta_i^{(t)} \nabla l_i(\Omega_k,D_i))
	\end{split}
\end{equation}

If $\eta_i^{(t)} \leq \frac{\|  \omega_i^{(t)}-\Omega_k\|_2}{Q U}$, we can prove that:
\begin{equation}
	\mathcal{F}^{(t,L)}\leq \mathcal{F}^{(t,M)}
	%  \mathcal{F}(r;w^{(t+1)};\Omega) \leq \mathcal{F}(r;w^{(t)};\Omega) 
\end{equation}
\end{lemma}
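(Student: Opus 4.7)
The plan is to track how the clustering objective $\mathcal{F}$ evolves during the Local update step of round $t+1$. Since the Distribution step immediately precedes local update and sets $\omega_i = \Omega_k$ for every client $i \in k$, I will begin by noting that at this moment the residual $\omega_i - \Omega_k$ is exactly zero; all of the subsequent change in $\mathcal{F}$ during the L step is therefore produced by the $Q$ gradient descent steps, during which $\Omega_k$ is held fixed while each $\omega_i$ drifts away from it. The key is to control this drift tightly enough that the resulting residual does not exceed the pre-distribution residual $\|\omega_i^{(t)} - \Omega_k\|_2$, which is exactly what $\mathcal{F}^{(t,M)}$ measures.

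Using the recursion in Eq.~\ref{eq:wi}, I would express the post-update residual $\omega_i^{(Q)} - \Omega_k$ as $-\eta_i^{(t)} \sum_{q=0}^{Q-1} \nabla l_i(\omega_i^{q}, D_i)$. Applying the triangle inequality and then Assumption~\ref{as_bound}, the Euclidean norm of this sum of stochastic gradients is at most $\eta_i^{(t)} Q U$ in expectation. The learning rate condition $\eta_i^{(t)} \leq \|\omega_i^{(t)} - \Omega_k\|_2 / (Q U)$ is calibrated precisely so that $\eta_i^{(t)} Q U \leq \|\omega_i^{(t)} - \Omega_k\|_2$, where $\omega_i^{(t)}$ denotes the client's parameter entering the M step (equivalently, the value of $\omega_i$ just before Distribution replaces it). Squaring the resulting norm inequality, weighting by $\lambda_i$, and summing first over clients within each cluster and then over clusters yields the per-term bound $\lambda_i \|\omega_i^{(t,L)} - \Omega_k\|_2^2 \leq \lambda_i \|\omega_i^{(t)} - \Omega_k\|_2^2$, which aggregates directly to $\mathcal{F}^{(t,L)} \leq \mathcal{F}^{(t,M)}$ and closes the lemma in the same pattern as Lemmas~\ref{l1} and~\ref{l2}.

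The main obstacle is subtle but not deep: Assumption~\ref{as_bound} only bounds the first moment $\mathbb{E}[\|\nabla l\|_2]$, whereas squaring the triangle-inequality estimate naturally wants a bound on the second moment $\mathbb{E}[\|\nabla l\|_2^2]$. I would handle this either by treating $U$ as a pointwise bound on each gradient realization (so the chain of inequalities holds deterministically and squaring is unproblematic), or by carrying the expectation through from the outset so that the per-client inequality reads as a bound on $\mathbb{E}[\|\omega_i^{(t,L)} - \Omega_k\|_2]$ and then using the calibrated learning-rate condition to conclude the corresponding squared statement. Either route produces the required monotonicity of $\mathcal{F}$ across the L step, and combined with Lemmas~\ref{l1} and~\ref{l2} establishes that $\mathcal{F}$ is non-increasing over the full E–M–D–L cycle, giving convergence of $\mathcal{F}$ as claimed in Theorem~\ref{theo1}.
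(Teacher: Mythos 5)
Your argument is essentially identical to the paper's own proof: both track the drift of $\omega_i$ away from the fixed centroid $\Omega_k$ over the $Q$ local steps, bound it by $\eta_i^{(t)}QU$ via the triangle inequality and Assumption~\ref{as_bound}, and use the learning-rate condition to keep the new residual below the pre-distribution residual $\|\omega_i^{(t)}-\Omega_k\|_2$ before summing with weights $\lambda_i$. The first-moment-versus-squaring issue you flag is real but resolves exactly as you suggest: the local update uses the full local gradient, whose norm is bounded by $U$ deterministically via Jensen's inequality applied to Assumption~\ref{as_bound}, which is the reading the paper implicitly adopts.
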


\begin{proof}
\begin{equation}
	\begin{split}
		\|  \omega_i^{(n+1)}-\Omega_k\|_2 &= \| \Omega_k- \eta_i^{(t)} \nabla l_i(\omega_i^0,D_i)-\dots-\eta_i^{(t)}\nabla l_i(\omega_i^{Q-1},D_i)-\Omega_k\|_2 \\
		&=\eta_i^{(t)} \| \nabla l_i(\omega_i^0,D_i)+\dots+\nabla l_i(\omega_i^{Q-1},D_i)\|_2
		% & \leq \eta^2 L^2 \| \bar{\omega_i}-\omega_i^{\star} \|_2^2  (\nabla l_i(\omega_i^{\star},D_i)=0)
	\end{split}
\end{equation}
So if we want to:
\begin{equation}
	\begin{split}
		\|  \omega_i^{(n+1)}-\Omega_k\|_2^2 
		& =\eta_i^{(t)} \| \nabla l_i(\omega_i^0,D_i)+\dots+\nabla l_i(\omega_i^{Q-1},D_i)\|_2^2  \\
		&\leq (\eta_i^{(t)} Q U)^2 \\
		& \leq \|  \omega_i^{(t)}-\Omega_k\|_2^2 
	\end{split}
\end{equation}
$\eta$ should be:
% \begin{equation}
	%     \eta \leq L\frac{\| \bar{\omega_i}-\omega_i^{\star} \|}{\lambda_{i}\|  \omega_i^{(t)}-\Omega_k\|}
	% \end{equation}
\begin{equation}
	\eta_i^{(t)} \leq \frac{\|  \omega_i^{(t)}-\Omega_k\|_2}{Q U}
\end{equation}
In particular, if $\|  \omega_i^{(t)}-\Omega_k\|=0$, then $\eta_i^{(t)}=0$, $\omega_i$ does not change, or if $\| \nabla l_i\|$ equals 0, it means $\omega_i$ has been to the local minimum.
\end{proof}

For Theorem \ref{theo1}, the proof is as below:
\begin{proof}
% For E-step we use Lemma \ref{l1}, 
% \begin{equation}
	%     \begin{split}
		%       \mathcal{F}(r^{(t+1)};\omega;\Omega) \leq \mathcal{F}(r^{(t)};\omega;\Omega)
		%     \end{split}
	% \end{equation}
% For M-step we use Lemma \ref{l2}, 
% \begin{equation}
	%     \begin{split}
		%       \mathcal{F}(r^{(t+1)};\omega;\Omega^{(t+1)}) \leq
		%       \mathcal{F}(r^{(t+1)};\omega;\Omega^{(t)})
		%       \leq \mathcal{F}(r^{(t)};\omega;\Omega^{(t)})
		%     \end{split}
	% \end{equation}
% For local-update step we use Lemma \ref{l3},
% \begin{equation}
	%     \begin{split}
		%         \mathcal{F}(r^{(t+1)};\omega^{(t+1)};\Omega^{(t+1)}) \leq
		%       \mathcal{F}(r^{(t+1)};\omega^{(t)};\Omega^{(t+1)}) \leq
		%         \mathcal{F}(r^{(t)};\omega^{(t)};\Omega^{(t)})
		%     \end{split}
	% \end{equation}
In communication round t+1, use Lemma \ref{l1} \ref{l2} \ref{l3}, it is easy to get:
\begin{equation}
	\mathcal{F}^{(t+1,L)}\leq \mathcal{F}^{(t,L)}
\end{equation}
which also means $\mathcal{F}^{(t+1)} \leq \mathcal{F}^{(t)}$, because $\mathcal{F}$ must be non-negative, and there are finite steps for this minimization, then according to monotone convergence theorem for sequences, $\{ \mathcal{F}^{(t)}\}$ converges with finite iterations, which means for an arbitrary $\epsilon$, we can find a specific $N$, for any $ n>N, \mathcal{F}^{(t)}-\mathcal{F}^\star < \epsilon$.
\end{proof}

\subsection{Proof of Theorem \ref{theo2}}
\begin{lemma}
\label{lem4}
Under Assumption \ref{as_bound} and \ref{as1}, from the Expectation step to Maximization step in arbitrary communication round, $\mathcal{R}^{M} \leq \mathcal{R}^{E}+\eta B E U^2$.
\end{lemma}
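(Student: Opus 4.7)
The plan is to control the change $\mathcal{R}^{M}-\mathcal{R}^{E}$ caused by replacing the cluster centroids $\Omega_k^{E}$ (inherited from the previous round and unchanged during the E-step) with the weighted averages $\Omega_k^{M}=\sum_{i\in k}\frac{\lambda_i}{\sum_{j\in k}\lambda_j}\omega_i$ produced by the M-step. Because the assignments $r_{i,k}$ are frozen across this transition, the change reduces to $\frac{1}{\sum_j \lambda_j}\sum_{k}\sum_{i\in k}\lambda_i\bigl[\mathcal{L}(\Omega_k^{M},D_i)-\mathcal{L}(\Omega_k^{E},D_i)\bigr]$, and that is the quantity I need to bound.

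First I would invoke convexity (Assumption \ref{as1}) with $x=\Omega_k^{M}$, $y=\Omega_k^{E}$ and rearrange to obtain $\mathcal{L}(\Omega_k^{M},D_i)-\mathcal{L}(\Omega_k^{E},D_i)\leq \langle \nabla\mathcal{L}(\Omega_k^{M},D_i),\Omega_k^{M}-\Omega_k^{E}\rangle$. Because the displacement $\Omega_k^{M}-\Omega_k^{E}$ is common to every $i\in k$, summing over the cluster produces an inner product between the cluster-aggregated gradient and this displacement, which is precisely the object the clusterability bound is designed to control.

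Second, I would substitute the explicit form of the shift. The local-update recursion from Lemma \ref{l3} shows that each $\omega_i$ feeding into the current M-step equals $\Omega_k^{E}-\eta\sum_{q=0}^{Q-1}\nabla l(\omega_i^{q},D_i)$, so $\Omega_k^{M}-\Omega_k^{E}=-\eta\sum_{i\in k}\frac{\lambda_i}{\sum_{j\in k}\lambda_j}\sum_{q=0}^{Q-1}\nabla l(\omega_i^{q},D_i)$. This writes the displacement as a negative multiple of the weighted cluster-average gradient $\bar{\nabla}_k$ appearing in Definition \ref{def:clu}. I would then decompose each inner product as $\langle \bar{\nabla}_k,\Omega_k^{M}-\Omega_k^{E}\rangle + \langle \nabla\mathcal{L}(\Omega_k^{M},D_i)-\bar{\nabla}_k,\Omega_k^{M}-\Omega_k^{E}\rangle$: the first piece is $-\eta Q\|\bar{\nabla}_k\|_2^2$ up to scaling and hence non-positive (it only helps), while Cauchy-Schwarz together with Definition \ref{def:clu} bounds the residual's first factor by $B\|\bar{\nabla}_k\|_2\leq BU$ and Assumption \ref{as_bound} applied to the explicit shift bounds $\|\Omega_k^{M}-\Omega_k^{E}\|_2$ by $\eta Q U$. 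Taking expectations to replace stochastic gradients by their unbiased counterparts (Assumption \ref{as_bound}) and summing across all clusters produces the claimed bound $\mathcal{R}^{M}\leq\mathcal{R}^{E}+\eta B Q U^{2}$, which matches the statement with $E$ interpreted as the count of local update steps.

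The main obstacle is reconciling three different points at which gradients are evaluated: clusterability is stated at the current iterates $\omega_p$, convexity hands gradients of $\mathcal{L}$ at $\Omega_k^{M}$, and the centroid-shift expansion involves gradients of $l$ at the intermediate local iterates $\omega_i^{q}$. I would align these either by noting that $\omega_i^{0}=\Omega_k^{E}$ so all three points lie within an $\eta Q U$ ball, or by smoothly interpolating using Assumption \ref{as2} and absorbing the higher-order smoothness terms; a secondary care is to use only the unbiasedness and uniform bound in Assumption \ref{as_bound}, since the variance assumption \ref{as3} is not among the hypotheses of this lemma.
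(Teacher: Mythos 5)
Your proof bounds a different quantity from the one the paper needs. In the paper's accounting, $\mathcal{R}^{E}$ is the loss evaluated at the clients' current local models, $\mathcal{R}^{E}=\frac{1}{\sum_j\lambda_j}\sum_k\sum_{i\in k}\lambda_i\mathcal{L}(\omega_i,D_i)$; this is precisely what makes $\mathcal{R}^{(t,E)}=\mathcal{R}^{(t-1,L)}$ and lets Lemma \ref{lem4} chain with Lemma \ref{lem5} in the proof of Theorem \ref{theo2}. You instead compare $\mathcal{L}(\Omega_k^{M},D_i)$ with $\mathcal{L}(\Omega_k^{E},D_i)$, i.e.\ with the stale centroid inherited from the previous round; even if your bound held, it would relate $\mathcal{R}^{M}$ to the previous round's aggregation loss rather than to the end-of-local-update loss, and the telescoping in Theorem \ref{theo2} would no longer go through. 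The paper's route is: convexity gives $\mathcal{L}(\Omega_k^{M},D_i)-\mathcal{L}(\omega_i,D_i)\leq\langle\nabla\mathcal{L}(\Omega_k^{M},D_i),\Omega_k^{M}-\omega_i\rangle$, Cauchy--Schwarz and Assumption \ref{as_bound} supply the factor $U$, and then $\Omega_k^{M}-\omega_i=-\eta\sum_{q=0}^{Q-1}\bigl(\sum_{p\in k}\tfrac{\lambda_p}{\sum_{z\in k}\lambda_z}\nabla l_p(\omega_p^{q})-\nabla l_i(\omega_i^{q})\bigr)$ is, at each local step $q$, exactly the weighted-average-minus-individual gradient difference that Definition \ref{def:clu} bounds by $B\cdot U$, so $\|\Omega_k^{M}-\omega_i\|_2\leq\eta QBU$ and the product is $\eta BQU^{2}$.

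Within your own decomposition two steps also fail to close. First, the piece $\langle\bar\nabla_k,\Omega_k^{M}-\Omega_k^{E}\rangle$ equals $-\eta\langle\bar\nabla_k,\sum_{i\in k}\tfrac{\lambda_i}{\sum_{j\in k}\lambda_j}\sum_{q}\nabla l_i(\omega_i^{q})\rangle$, not $-\eta Q\|\bar\nabla_k\|_2^{2}$: the displacement involves gradients at the intermediate iterates $\omega_i^{q}$ rather than at the points where $\bar\nabla_k$ is defined, so this term need not be non-positive and cannot simply be discarded as ``only helping.'' Second, Definition \ref{def:clu} controls $\|\bar\nabla_k-\nabla l(\omega_i,D_i)\|_2$, not $\|\nabla\mathcal{L}(\Omega_k^{M},D_i)-\bar\nabla_k\|_2$; bridging the two evaluation points requires $\beta$-smoothness and contributes terms of order $\beta\eta^{2}Q^{2}U^{2}$ that are absent from the claimed bound, and Assumption \ref{as2} is not among the hypotheses of this lemma. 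The paper sidesteps both issues by applying Cauchy--Schwarz \emph{before} invoking clusterability, so that $B$ is only ever used to bound the parameter displacement $\|\Omega_k^{M}-\omega_i\|_2$, where the two gradients being compared are evaluated at the same local step.
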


\begin{proof}
\begin{align}
	\mathcal{R}^{M} - \mathcal{R}^{E} &= \frac{1}{\sum_{j=1}^m \lambda_j} \sum_{k=1}^{K}\sum_{i \in k} \lambda_i (\mathcal{L}(\Omega_k^{M}, D_i)-\mathcal{L}(\omega_i, D_i)) 
	% 	&= \frac{1}{\sum_{j=1}^m \lambda_j} \sum_{k=1}^{K}\sum_{i \in k} \lambda_i (\mathcal{L}(\sum_{p \in k} \frac{ \lambda_{p}}{\sum_{z \in k} \lambda_{z}} \omega_p, D_i)-\mathcal{L}(\omega_i, D_i)) 
\end{align}
in which
\begin{align}
	\Omega_k^{M} = \sum_{p \in k} \frac{ \lambda_{p}}{\sum_{z \in k} \lambda_{z}} \omega_p
\end{align}
According to Assumption \ref{as1} and Equation \ref{eq:convex}, for arbitrary cluster, we have
\begin{align}
	&\sum_{i \in k} \lambda_i (\mathcal{L}(\sum_{p \in k} \frac{ \lambda_{p}}{\sum_{z \in k} \lambda_{z}} \omega_p, D_i)-\mathcal{L}(\omega_i, D_i)) \\ \leq& \sum_{i \in k} \lambda_i (\langle \nabla\mathcal{L}(\Omega_k^{M}, D_i), \sum_{p \in k} \frac{ \lambda_{p}}{\sum_{z \in k} \lambda_{z}} \omega_p - \omega_i \rangle) \\
	\leq& \sum_{i \in k} \lambda_i \|\nabla\mathcal{L}(\Omega_k^{M}, D_i)\|_2 \cdot \|\sum_{p \in k}\frac{ \lambda_{p}}{\sum_{z \in k} \lambda_{z}} \omega_p - \omega_i\|_2 (Cauchy–Schwarz)\\
	\leq& \sum_{i \in k} \lambda_i U \|\sum_{p \in k}\frac{ \lambda_{p}}{\sum_{z \in k} \lambda_{z}} \omega_p - \omega_i\|_2(Assumption\ \ref{as_bound})
\end{align}

According to Equation \ref{eq:wi},
\begin{align}
	\omega_i =\Omega_k-\eta \nabla l_i(\omega_i^0,D_i)-\dots-\eta \nabla l_i(\omega_i^{Q-1},D_i)
\end{align}
So we can get below inequality depending on Definition \ref{def:clu}:
\begin{align}
	\sum_{i \in k} \lambda_i U \|\sum_{p \in k}\frac{ \lambda_{p}}{\sum_{z \in k} \lambda_{z}} \omega_p - \omega_i\|_2 \leq \sum_{i \in k} \lambda_i \eta B Q U^2
\end{align}

% For arbitrary Client i, using Jensen inequality under Assumption \ref{as1},
% \begin{align}
	% 	\lambda_i (\mathcal{L}(\sum_{p \in k} \frac{ \lambda_{p}}{\sum_{z \in k} \lambda_{z}} \omega_p, D_i)-\mathcal{L}(\omega_i, D_i))  & \leq \lambda_i (\sum_{p \in k} \frac{ \lambda_{p}}{\sum_{z \in k} \lambda_{z}} \mathcal{L}( \omega_p, D_i)-\mathcal{L}(\omega_i, D_i)) \\
	% 	& = \lambda_i (\sum_{p \in k} \frac{ \lambda_{p}}{\sum_{z \in k} \lambda_{z}} (\mathcal{L}( \omega_p, D_i)-\mathcal{L}(\omega_i, D_i)))
	% \end{align}
% It is intuitive to see that:
% \begin{align}
	% 	\mathcal{L}( \omega_p, D_i)-\mathcal{L}(\omega_i, D_i) \geq 0 \text{ (almost surely)}
	% \end{align}
Finally:
\begin{align}
	\mathcal{R}^{M} \leq \mathcal{R}^{E}+\eta B Q U^2
\end{align}
\end{proof}

\begin{lemma}
\label{lem5}
Under Assumption \ref{as2} and \ref{as3}, from the Maximization step to Local update step in arbitrary communication round, we have
\begin{align}
	{\mathbb E}[\mathcal{R}^{L}] - \mathcal{R}^{M} \leq \frac{1}{\sum_{j=1}^m \lambda_j} \sum_{k=1}^{K}\sum_{i \in k} \lambda_i \sum_{q=0}^{Q-1} ((\frac{\beta \eta^2_q}{2} - \eta_q) {\mathbb E}[\| \nabla \mathcal{L}(\Omega_k^{(M,q)})\|_2^2] + \frac{\beta \eta^2_q}{2} \sigma^2)
\end{align}

\end{lemma}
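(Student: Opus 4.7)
\textbf{Proof plan for Lemma \ref{lem5}.} The inequality is a cluster-wise and client-wise aggregation of the classical stochastic-gradient descent lemma, so the plan is to derive a per-step descent bound for a single client, telescope it over the $Q$ local steps, and then sum over clients weighted by $\lambda_i$. Throughout the local-update phase of a fixed round, each client $i \in k$ is initialized at $\omega_i^{(0)} = \Omega_k^{(M)}$ by the preceding distribution step and then performs
\[
\omega_i^{(q+1)} = \omega_i^{(q)} - \eta_q \nabla l(\omega_i^{(q)}, \xi_i^{(q)}), \qquad q = 0, 1, \ldots, Q-1.
\]

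First I would invoke $\beta$-smoothness (Assumption \ref{as2}) on the client loss $\mathcal{L}(\cdot, D_i)$ evaluated at two consecutive iterates to obtain
\[
\mathcal{L}(\omega_i^{(q+1)}, D_i) \leq \mathcal{L}(\omega_i^{(q)}, D_i) - \eta_q \langle \nabla \mathcal{L}(\omega_i^{(q)}, D_i),\, \nabla l(\omega_i^{(q)}, \xi_i^{(q)}) \rangle + \frac{\beta \eta_q^2}{2}\, \|\nabla l(\omega_i^{(q)}, \xi_i^{(q)})\|_2^2.
\]
Taking conditional expectation over $\xi_i^{(q)}$, I would use unbiasedness of the stochastic gradient (Assumption \ref{as_bound}) to replace the inner-product term by $-\eta_q \|\nabla \mathcal{L}(\omega_i^{(q)}, D_i)\|_2^2$, and then use the bounded-variance identity from Assumption \ref{as3} to write ${\mathbb E}[\|\nabla l(\omega_i^{(q)}, \xi_i^{(q)})\|_2^2] \leq \|\nabla \mathcal{L}(\omega_i^{(q)}, D_i)\|_2^2 + \sigma^2$. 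Combining and applying the tower rule over the local randomness yields the single-step descent inequality
\[
{\mathbb E}[\mathcal{L}(\omega_i^{(q+1)}, D_i)] - {\mathbb E}[\mathcal{L}(\omega_i^{(q)}, D_i)] \leq \left(\frac{\beta \eta_q^2}{2} - \eta_q\right) {\mathbb E}[\|\nabla \mathcal{L}(\omega_i^{(q)}, D_i)\|_2^2] + \frac{\beta \eta_q^2}{2}\, \sigma^2.
\]

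Next I would telescope this inequality from $q = 0$ to $q = Q-1$; the left-hand side collapses to ${\mathbb E}[\mathcal{L}(\omega_i^{L}, D_i)] - \mathcal{L}(\Omega_k^{(M)}, D_i)$, which is exactly the per-client increment appearing in the definitions of $\mathcal{R}^L$ and $\mathcal{R}^M$. Multiplying by $\lambda_i/\sum_j \lambda_j$ and summing over all $i \in k$ and all clusters $k$ yields ${\mathbb E}[\mathcal{R}^L] - \mathcal{R}^M$ on the left and the triple sum in the statement on the right, under the convention that $\Omega_k^{(M,q)}$ inside the sum over $i$ is identified with the local iterate $\omega_i^{(q)}$ whose gradient is being measured.

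The only real obstacle is notational rather than analytical: although all clients in cluster $k$ start from the same $\Omega_k^{(M)}$, their local iterates diverge after the first stochastic step, so the symbol $\Omega_k^{(M,q)}$ must be read as the client-indexed trajectory sitting inside the sum over $i\in k$. Once this bookkeeping is made explicit and the conditional expectations are unrolled step-by-step, no further inequality beyond smoothness, unbiasedness, and bounded variance is required, and the result follows by direct summation.
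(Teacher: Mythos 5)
Your proof follows the same route as the paper's: the one-step $\beta$-smoothness descent bound combined with unbiasedness and the bounded-variance identity, telescoped over the $Q$ local steps, and then aggregated over clients with weights $\lambda_i/\sum_j\lambda_j$. Your explicit remark that $\Omega_k^{(M,q)}$ must be read as the client-indexed local iterate $\omega_i^{(q)}$ is a bookkeeping point the paper leaves implicit (it writes the iterate as cluster-level while indexing the stochastic batch by $i$), but it does not change the argument.
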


\begin{proof}
\begin{align}
	\mathcal{R}^{L} - \mathcal{R}^{M} &= \frac{1}{\sum_{j=1}^m \lambda_j} \sum_{k=1}^{K}\sum_{i \in k} \lambda_i (\mathcal{L}(\Omega_k^{L}, D_i)-\mathcal{L}(\Omega_k^{M}, D_i)) 
\end{align}
For arbitrary Client i, using Gradient Descent,
\begin{align}
	\mathcal{L}(\Omega_k^{L}, D_i)-\mathcal{L}(\Omega_k^{M}, D_i) &= \sum_{q=0}^{Q-1} (\mathcal{L}(\Omega_k^{(M,q+1)}, D_i) -\mathcal{L}(\Omega_k^{(M,q)}, D_i))
\end{align}
Under Assumption \ref{as2},
\begin{align}
	\mathcal{L}(\Omega_k^{(M,q+1)}) -\mathcal{L}(\Omega_k^{(M,q)}) & \leq \langle \nabla \mathcal{L}(\Omega_k^{(M,q)}), \Omega_k^{(M,q+1)}  -\Omega_k^{(M,q)}\rangle + \frac{\beta}{2} \| \Omega_k^{(M,q+1)}  -\Omega_k^{(M,q)}\|_2^2 \\
	& = -\eta \langle \nabla \mathcal{L}(\Omega_k^{(M,q)}), \nabla \mathcal{L}(\Omega_k^{(M,q)}, \xi^e_i) \rangle + \frac{\beta \eta^2}{2} \| \nabla \mathcal{L}(\Omega_k^{(M,q)}, \xi^e_i)\|_2^2 
\end{align}
take expectation on both sides for random selected batch $\xi^e_i$ under Assumption \ref{as3}, 
\begin{align}
	{\mathbb E}[\mathcal{L}(\Omega_k^{(M,q+1)})] -\mathcal{L}(\Omega_k^{(M,q)}) \leq (\frac{\beta \eta^2}{2} - \eta)\| \nabla \mathcal{L}(\Omega_k^{(M,q)})\|_2^2 + \frac{\beta \eta^2}{2} \sigma^2
\end{align}
% when $\eta_q<\frac{\|\nabla \mathcal{L}(\Omega_k^{(M,q)})\|^{2}}{\|\nabla \mathcal{L}(\Omega_k^{(M,q)})\|^{2}+\sigma^2}\frac{2}{\beta}$, the right term is negative, which means the expectation of $\mathcal{L}$ decrease monotonically, thus converges.

\noindent take expectation on both sides again on random variable $\Omega_k^{(M,q)}$, and do telesoping, we can get, 
\begin{align}
	{\mathbb E}[\mathcal{L}(\Omega_k^{L}, D_i)]-\mathcal{L}(\Omega_k^{M}, D_i) &= \sum_{q=0}^{Q-1} ({\mathbb E}[\mathcal{L}(\Omega_k^{(M,q+1)}, D_i)] -\mathcal{L}(\Omega_k^{(M,q)}, D_i)) \\
	& \leq \sum_{q=0}^{Q-1} ((\frac{\beta \eta^2_q}{2} - \eta_q) {\mathbb E}[\| \nabla \mathcal{L}(\Omega_k^{(M,q)})\|_2^2] + \frac{\beta \eta^2_q}{2} \sigma^2)
\end{align}
Finally,
\begin{align}
	{\mathbb E}[\mathcal{R}^{L}] - \mathcal{R}^{M} \leq \frac{1}{\sum_{j=1}^m \lambda_j} \sum_{k=1}^{K}\sum_{i \in k} \lambda_i \sum_{q=0}^{Q-1} ((\frac{\beta \eta^2_q}{2} - \eta_q) {\mathbb E}[\| \nabla \mathcal{L}(\Omega_k^{(M,q)})\|_2^2] + \frac{\beta \eta^2_q}{2} \sigma^2)
\end{align}
\end{proof}

Then for Theorem \ref{theo2}, the proof is as below:
\begin{proof}
From the local distribution step in communication round t-1 to the Expecation step in communication round t, what is changed in loss function of WeCFL $\mathcal{R}$ is the $r_i^k$, but the $\mathcal{L}(\Omega_k,D_i)$ does not change, so we can get
\begin{align}
	\mathcal{R}^{(t-1,L)} = \mathcal{R}^{(t,E)}
\end{align}
then according to Lemma \ref{lem4} and \ref{lem5}, we can get,
\begin{align}
	&{\mathbb E}[\mathcal{R}^{(t,L)}]-\mathcal{R}^{(t-1,L)} \\ &\leq \eta B Q U^2+\frac{1}{\sum_{j=1}^m \lambda_j} \sum_{k=1}^{K}\sum_{i \in k} \lambda_i \sum_{q=0}^{Q-1} ((\frac{\beta \eta^2_{(t,q)}}{2} -\eta_{(t,q)}) {\mathbb E}[\| \nabla \mathcal{L}(\Omega_k^{(t,M,q)})\|_2^2] +\frac{\beta \eta^2_{(t,q)}}{2} \sigma^2)\\
	& = \frac{1}{\sum_{j=1}^m \lambda_j} \sum_{k=1}^{K}\sum_{i \in k} \lambda_i \sum_{q=0}^{Q-1} ((\frac{\beta \eta^2_{(t,q)}}{2} -\eta_{(t,q)}) {\mathbb E}[\| \nabla \mathcal{L}(\Omega_k^{(t,M,q)})\|_2^2] +\frac{\beta \eta^2_{(t,q)}}{2} \sigma^2+ \eta_{(t,q)} B U^2)
	\label{eq 35}
\end{align}
then when 
\begin{align}
	\eta_{(t,q)} < min\{\frac{\| \omega_i^{(t)}-\Omega_k\|}{Q U},\frac{{\mathbb E}[\| \nabla \mathcal{L}(\Omega_k^{(t,M,q)})\|_2^2] -B U^2}{{\mathbb E}[\| \nabla \mathcal{L}(\Omega_k^{(t,M,q)})\|_2^2] +\sigma^2} \cdot \frac{2}{\beta}\}
\end{align}
the right term of Equation \ref{eq 35} is always negative. So we can ensure that the EM loss function $\mathcal{F}$ converges, and the FL loss function $\mathcal{R}$ decreases monotonically, thus the WeCFL converges.
\end{proof}

\subsection{Proof of Theorem \ref{theo3}}
\begin{proof}
Take expectation of Equation \ref{eq 35} on the parameter, then do telescoping from 0 to $T$, we can get,
\begin{align}
	&\Delta \geq \mathcal{R}^{(0,L)} - {\mathbb E}[\mathcal{R}^{(T,L)}]  \\ &\geq  \sum_{k=1}^{K}\sum_{i \in k} \sum_{t=0}^{T-1} \sum_{q=0}^{Q-1} \frac{\lambda_i }{\sum_{j=1}^m \lambda_j} ((\eta_{(t,q)} -\frac{\beta \eta^2_{(t,q)}}{2} ) {\mathbb E}[\| \nabla \mathcal{L}(\Omega_k^{(t,M,q)})\|_2^2] - \frac{\beta \eta^2_{(t,q)}}{2} \sigma^2-\eta_{(t,q)} U^2)
\end{align}
if
\begin{align}
	\frac{1}{ T Q}\sum_{k=1}^{K}\sum_{i \in k} \sum_{t=0}^{T-1} \sum_{q=0}^{Q-1} \frac{\lambda_i }{\sum_{j=1}^m \lambda_j} {\mathbb E}[\| \nabla \mathcal{L}(\Omega_k^{(t,M,q)})\|_2^2] \leq \epsilon
\end{align}
then
\begin{align}
	T \geq \frac{\Delta}{Q (\epsilon (\eta - \frac{\beta \eta^2}{2})-\frac{\beta \eta^2}{2} \sigma^2-\eta B U^2)}
\end{align}
% and $\eta$ must satisfy
% \begin{align}
	% 	\eta_{(t)} < min\{\frac{\| \omega_i^{(t)}-\Omega_k\|}{Q U},\frac{\epsilon}{\epsilon+\sigma^2}\cdot \frac{2}{\beta}\}
	% \end{align}
\end{proof}
%You can have as much text here as you want. The main body must be at most $8$ pages long.
%For the final version, one more page can be added.
%If you want, you can use an appendix like this one, even using the one-column format.

\pagebreak{}
\section{More Deatils of Experiments} \label{app:exp}

\subsection{More about experimental settings}

\textbf{Optimization settings} For the training model, we use small CNNs with two convolutional layers for Fashion-MNIST and CIFAR-10 as shown in Table \ref{tab:cnn_mnist} and \ref{tab:cnn_cifar}, respectively. For the optimization, SGD with the learning rate 0.001 and momentum 0.9 is used to train the model, and the batch size is 32.

\begin{table}[h]
% \vspace{-0.3cm}
\caption{Detailed information of the CNN for Fashion-MNIST.}
\label{tab:cnn_mnist}
\begin{center}
	\small
	% \resizebox{0.5\textwidth}{!}{
		\begin{tabular}{c|l}
			\toprule
			Layer & Details \\
			\midrule
			\multirow{4}{*}{Convolution} & $Conv2d(1,16,kernel\_size = (5,5), padding =2)$ \\
			&$BatchNorm2d(16)$\\
			& $ReLU()$\\
			& $MaxPool2d(2,2)$ \\
			\midrule
			\multirow{4}{*}{Convolution} &  $Conv2d(16,32, kernel\_size = (5,5), padding =2)$\\
			&$BatchNorm2d(16)$\\
			& $ReLU()$\\
			& $MaxPool2d(2,2)$ \\
			\midrule
			Classifier & $Linear(7*7*32,10)$ \\
			\midrule
			Loss & $CrossEntropy()$ \\
			\bottomrule
		\end{tabular}
		% }
\end{center}
% \vspace{-0.5cm}
\end{table}

\begin{table}[h]
% \vspace{-0.3cm}
\caption{Detailed information of the CNN for CIFAR-10.}
\label{tab:cnn_cifar}
\begin{center}
	\small
	% \resizebox{0.5\textwidth}{!}{
		\begin{tabular}{c|l}
			\toprule
			Layer & Details \\
			\midrule
			\multirow{3}{*}{Convolution}  & $Conv2d(3,6,kernel\_size = (5,5))$ \\
			& $ReLU()$\\
			& $MaxPool2d(2,2)$ \\
			\midrule
			\multirow{3}{*}{Convolution}  &  $Conv2d(6,16,kernel\_size = (5,5))$\\
			& $ReLU()$\\
			& $MaxPool2d(2,2)$ \\
			\midrule
			\multirow{2}{*}{Linear} &  $Linear(400,120)$\\
			& $ReLU()$\\
			\midrule
			\multirow{2}{*}{Linear} &  $Linear(120,84)$\\
			& $ReLU()$\\
			\midrule
			Classifier & $Linear(84,10)$ \\
			\midrule
			Loss & $CrossEntropy()$ \\
			\bottomrule
		\end{tabular}
		% }
\end{center}
% \vspace{-0.5cm}
\end{table}

\textbf{FL settings} For the FL settings, the non-IID pre-processing visualization is shown in Figure \ref{fig:4noniid}. We run 100 global communication rounds, and the local steps in each communication are 10. For the clustering process, we use flattened parameters of the fully-connected layers of CNNs as data points and weighted K-Means as the clustering algorithm.

\begin{figure}
\vspace{-0.4cm}
\centering
\begin{minipage}{0.48\textwidth}
	\centering
	\includegraphics[width=1\textwidth]{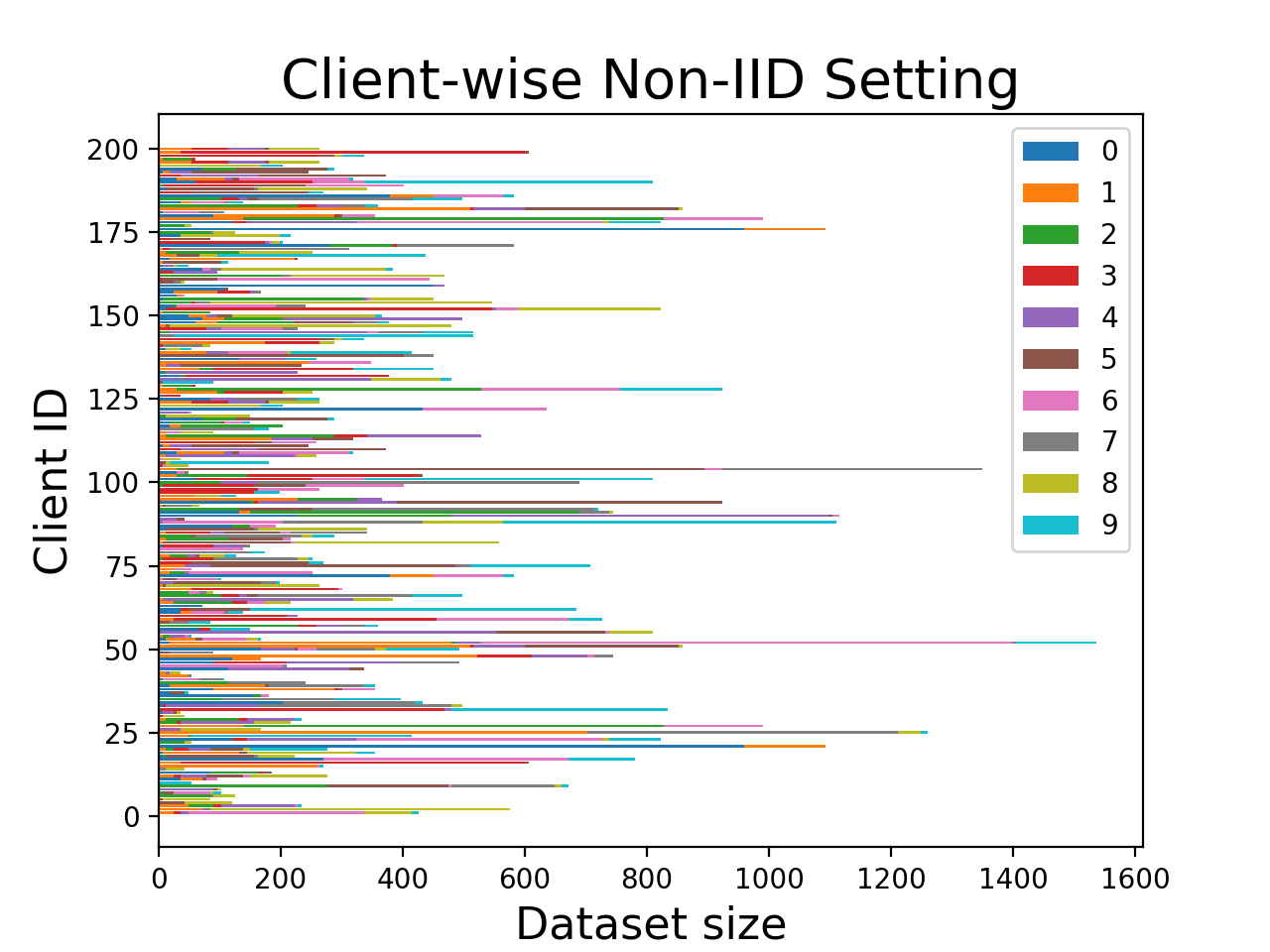} % first figure itself
	% \vspace{-0.3cm}
\end{minipage}\hfill
\begin{minipage}{0.48\textwidth}
	\centering
	\includegraphics[width=1\textwidth]{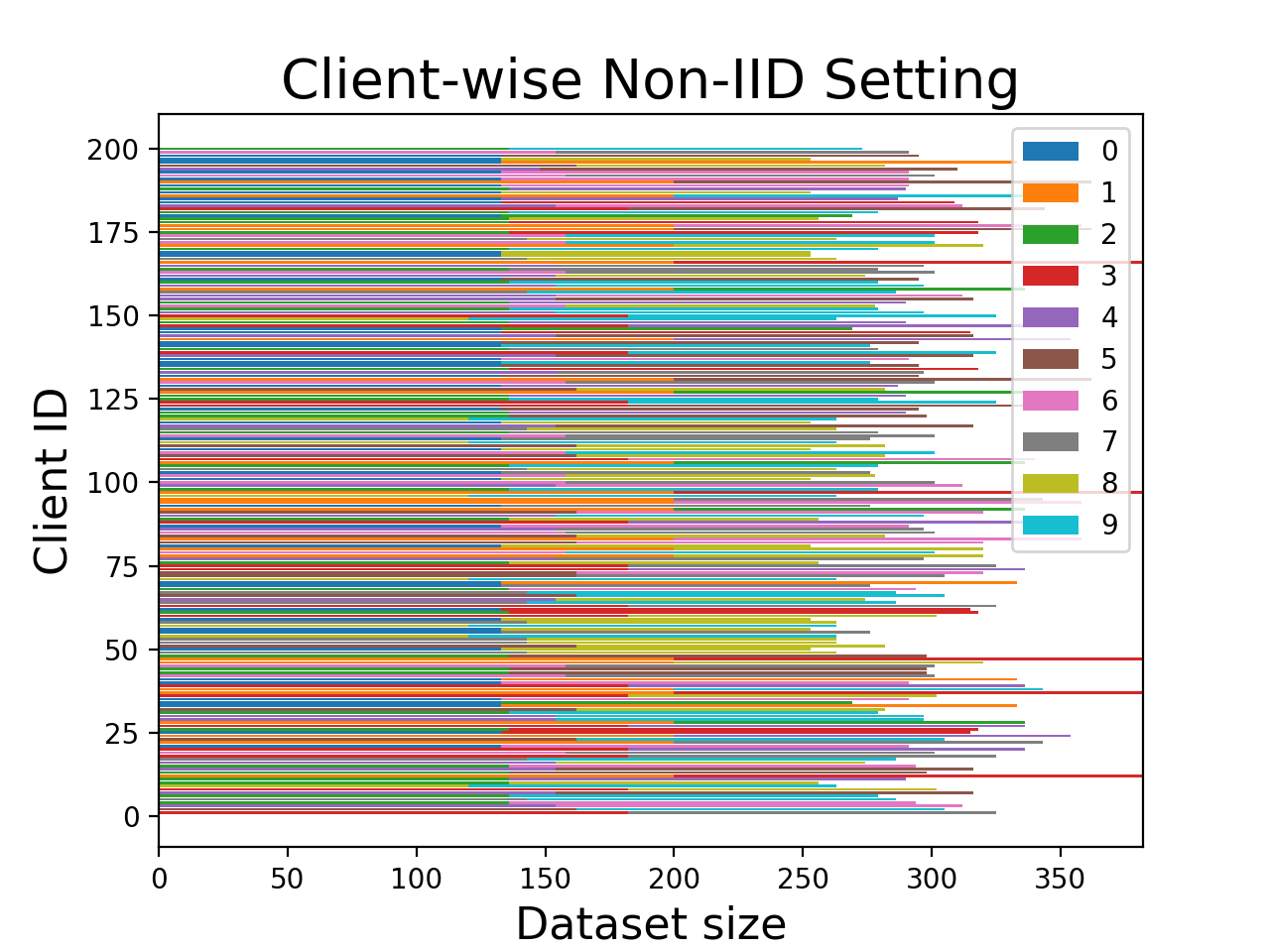} % second figure itself
	% \vspace{-0.3cm}
\end{minipage} \vfill
\begin{minipage}{0.48\textwidth}
	\centering
	\includegraphics[width=1\textwidth]{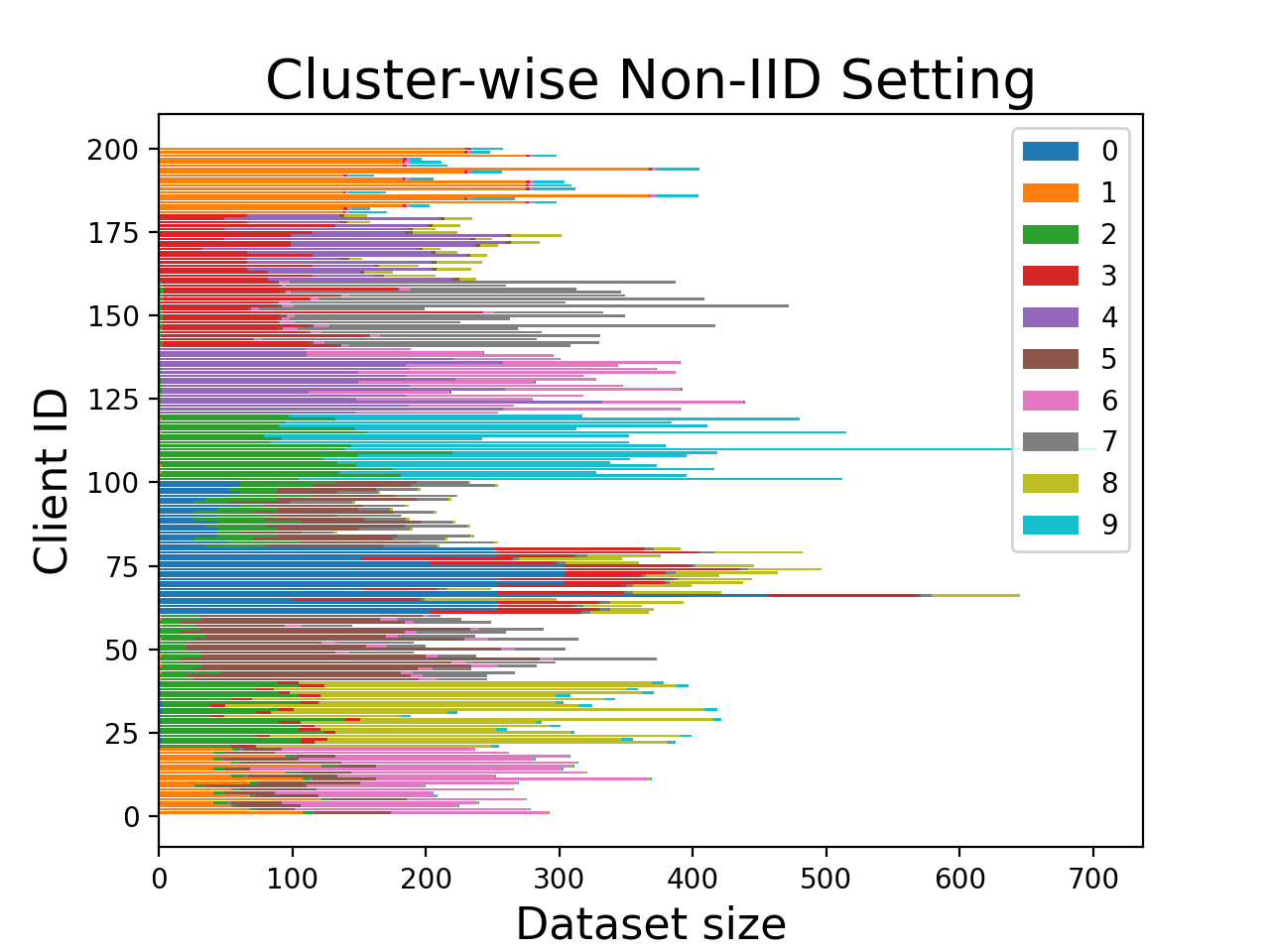} % second figure itself
	% \vspace{-0.3cm}
\end{minipage}\hfill
\begin{minipage}{0.48\textwidth}
	\centering
	\includegraphics[width=1\textwidth]{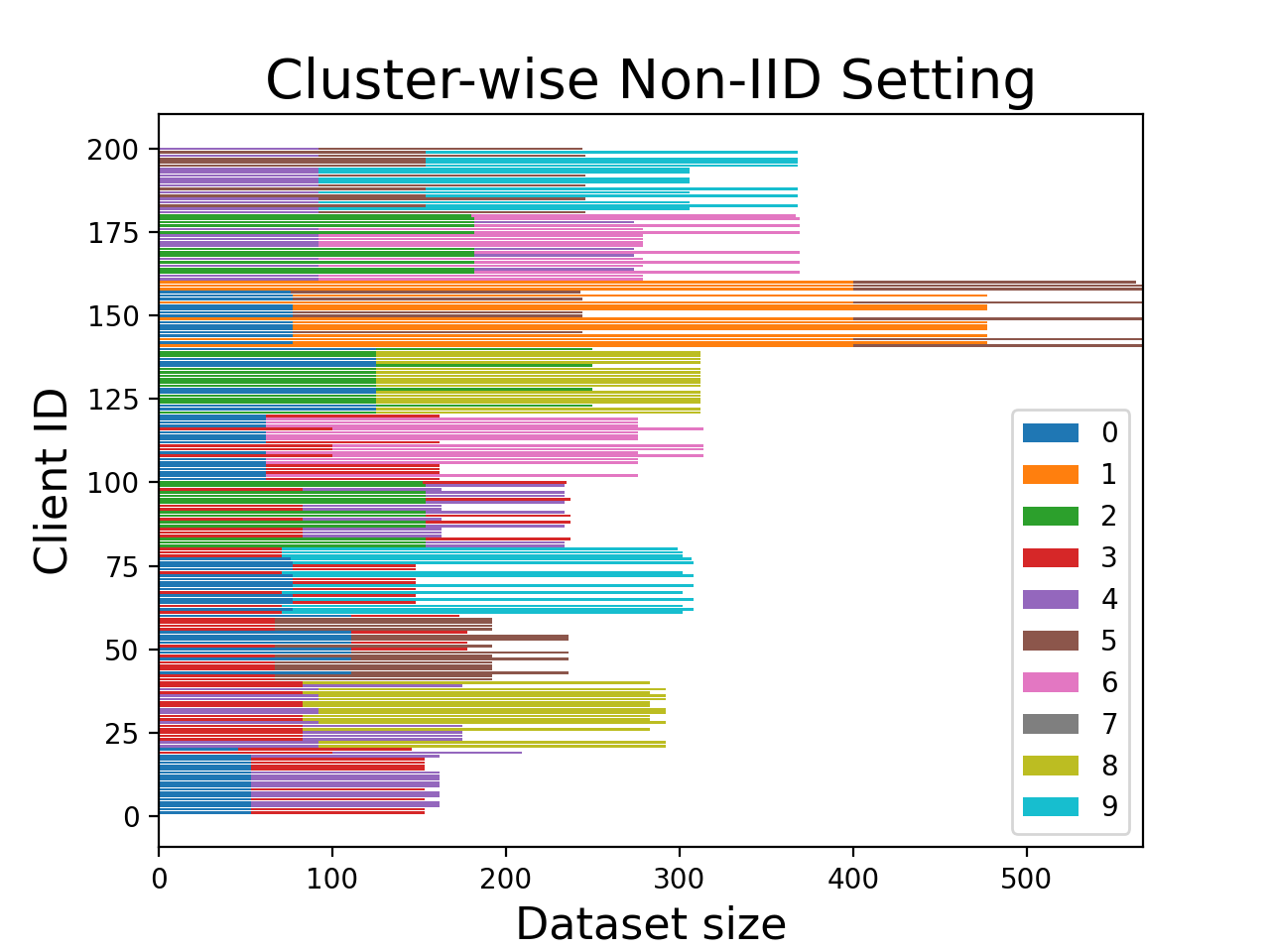} % second figure itself
	% \vspace{-0.3cm}
\end{minipage}
\caption{\small An example visualization of all four non-IID pre-processing methods on the Fashion-MNIST according to our experimental settings.}
\vspace{-0.2cm}
\label{fig:4noniid}
\end{figure}

% \begin{figure}[ht]
% \vskip 0.2in
% \begin{center}
	% 	\centerline{\includegraphics[width=\columnwidth]{}}
	% 	\caption{A random cluster-wise non-IID partition of the Fashion-MNIST dataset by Dirichlet distribution across 200 clients within 10 clusters.}
	% 	\label{fig:cluster}
	% \end{center}
% \vskip -0.2in
% \end{figure}

% \begin{figure}[ht]
% \vskip 0.2in
% \begin{center}
	% 	\centerline{\includegraphics[width=\columnwidth]{}}
	% 	\caption{A random cluster-wise Non-iid partition of the Fashion-MNIST dataset across 200 clients with 3-class method for 10 clusters and 2-class method for their clients.}
	% 	\label{fig: cluster_class}
	% \end{center}
% \vskip -0.2in
% \end{figure}

% \begin{figure}[ht]
% \vskip 0.2in
% \begin{center}
	% 	\centerline{\includegraphics[width=\columnwidth]{}}
	% 	\caption{A random client-wise Non-iid partition of the CIFAR-10 dataset across 200 clients with two classes in every client.}
	% 	\label{fig: client_class}
	% \end{center}
% \vskip -0.2in
% \end{figure}

\subsection{More about experimental results}

\textbf{Client-wise non-IID results} The experiment results in the client-wise non-IID setting is shown below in Table \ref{tab:client}. The results show that WeCFL outperforms almost all baselines. The statistical heterogeneity of CIFAR-10 is much higher than Fashion-MNIST or other MNIST dataset families. Therefore, WeCFL demonstrates superior performance improvements in CIFAR-10 than in Fashion-MNIST. Within a proper interval, larger $K$ leads to better performance. As shown in the table, when $K$ is increased from 5 to 10, all methods' performance increases. Furthermore, with a higher $K$, the performance of WeCFL improves more in CIFAR-10 than in Fashion-MNIST.

\begin{table}[H]
	\vspace{-0.3cm}
	\caption{Performance comparison on client-wise non-IID }
	\label{tab:client}
	\begin{center}
		\resizebox{1.0\columnwidth}{!}{%
			\begin{tabular}{ll|cccc|cccc}
				\toprule
				\multicolumn{2}{c}{Datasets} & \multicolumn{4}{c}{Fashion-MNIST} & \multicolumn{4}{c}{CIFAR-10} \\
				
				\midrule
				\midrule
				
				\multicolumn{2}{c}{Non-IID setting} & \multicolumn{2}{c}{$\alpha=0.1$} & \multicolumn{2}{c}{$2-$class} & \multicolumn{2}{c}{$\alpha=0.1$} & \multicolumn{2}{c}{$2-$class}  \\
				
				\midrule
				
				\textbf{K} & Methods & Accuracy & Macro-F1 & Accuracy & Macro-F1  & Accuracy & Macro-F1 & Accuracy & Macro-F1 \\
				
				\midrule
				
				\multirow{2}{*}{\textbf{1}} & FedAvg    &   85.9$\pm$0.46 & 54.52$\pm$2.66   & 86.17$\pm$0.25 & 44.88$\pm$1.24   &  25.62$\pm$3.47 & 11.38$\pm$2.02   &  24.3$\pm$3.53 & 8.56$\pm$0.64    \\
				& FedProx    & 86.03$\pm$0.58 & 54.69$\pm$3.32 & 86.47$\pm$0.23 &44.89$\pm$1.38 & 25.72$\pm$3.29 & 11.14$\pm$1.49  &  24.19$\pm$2.45 & 8.69$\pm$0.74\\
				%& Ditto & 93.14$\pm$0.89 & 85.52$\pm$1.41 & 92.39$\pm$1.40 & 85.22$\pm$0.59 &85.91$\pm$0.23 &77.98$\pm$0.44 & 82.66$\pm$0.23& 67.06$\pm$0.36 & 66.83$\pm$4.67  & 34.51$\pm$1.77 & 64.57$\pm$1.57 & 47.93$\pm$1.32  \\ 
				
				\midrule
				\multirow{5}{*}{\textbf{5}} & FedAvg+    &  86.12 & 61.07    & 86.5   & 45.39   &  25.71  & 12.45  &         24.83  &  8.74             \\ 
				
				& FedProx+ & 86.39 &  56.56  & 86.15  &  45.43   &  25.58  &  12.43  &   25.88  &  8.55       \\ 
				
				& IFCA &  90.13$\pm$6.81 & 68.47$\pm$5.23 & 91.54$\pm$5.04 & 72.3$\pm$5.32 & 47.21$\pm$ 10.28 & 22.67$\pm$1.48 &  46.54$\pm$12.8 & 17.78$\pm$1.29 \\ 
				
				& FeSEM &  91.51$\pm$2.9 & 73.78$\pm$9.88 & 91.83$\pm$1.24 & 71.05$\pm$8.63 & 54.3$\pm$4.58 & 24.78$\pm$6.01 &  55.55$\pm$4.83 & 32.8$\pm$4.18 \\
				& WeCFL &  91.59$\pm$0.82 & 74.45$\pm$10.53  &      91.76$\pm$1.53  &  69.47$\pm$5.04  & 55.09$\pm$5.1 & 27.29$\pm$8.37 &        55.89$\pm$5.92  &   33.12$\pm$5.0 \\ 
				
				\midrule
				\multirow{5}{*}{\textbf{10}} & FedAvg+    &   86.81 & 60.43    & 86.91    &  47.12     &   27.83  & 13.65  &       27.71  &  9.65         \\ 
				
				& FedProx+ &  86.24 &  56.2  &    86.78  &  42.83   &  25.86  &  12.84    &   26.16  &  9.94      \\ 
				
				& IFCA &  91.04$\pm$4.33 & 68.6$\pm$6.77 & 91.42$\pm$5.16 & 72.29$\pm$5.8& 47.62$\pm$10.15 & 23.36$\pm$2.48&  47.96$\pm$10.59 &17.88$\pm$1.04 \\
				& FeSEM &  93.3$\pm$2.0 & \textbf{80.47$\pm$11.05} & 93.75$\pm$1.53 &79.39$\pm$6.57 & 67$\pm$1.57 &31.69$\pm$8.52 &  63.64$\pm$6.51  & 42.97$\pm$6.08\\
				& WeCFL &   \textbf{94.21$\pm$1.67} & 79.31$\pm$11.02 & \textbf{94.05$\pm$1.67} &  \textbf{81.41$\pm$5.7} & \textbf{69.47$\pm$4.16} & \textbf{34.1$\pm$7.79}  & \textbf{66.8$\pm$6.39} & \textbf{45.61$\pm$5.9}\\
				\bottomrule
			\end{tabular}
		}
	\end{center}
	\vspace{-0.5cm}
\end{table}

% \begin{figure}[htbp!]
	% 	\vskip 0.2in
	% 	\begin{center}
		% 		\centerline{\includegraphics[width=\columnwidth]{}}
		% 		\caption{Test accuracy by the communication round on CIFAR-10 in $\alpha = 0.1$ client-wise non-IID setting.}
		% 		\label{fig:client_test}
		% 	\end{center}
	% 	\vskip -0.2in
	% \end{figure}

% \begin{figure}
	%     \centering
	%     \begin{minipage}{0.4\textwidth}
		%         \centering
		%         \includegraphics[width=1\textwidth]{} % first figure itself
		% % 		\caption{The change of clustering results in client-wise non-IID setting. The five lines illustrated represent five clusters, and every line is a mixture of thinner lines or a combination of clients. Changes in the cluster ID do not mean changes in clusters. Branches mean changes in clusters. We can see there are only branches with the red circle in round 1, then the clustering results remain stable.}
		% 	    \caption{The change of clustering results in client-wise non-IID setting.}
		% 		\label{cluster_client}
		%     \end{minipage}\hfill
	%     \begin{minipage}{0.4\textwidth}
		%         \centering
		%         \includegraphics[width=1\textwidth]{} % second figure itself
		%   	% 	\caption{The change of clustering results in cluster-wise non-IID setting. As illustrated, there are no branches in each round, which means the clustering results kept stable.}
		%   	    \caption{The change of clustering results in cluster-wise non-IID setting.}
		% 		\label{cluster_cluster}
		%     \end{minipage}
	% \end{figure}

\textbf{Clustering visualization} Figure \ref{fig:tsne_10} and \ref{fig:tsne_3} demonstrats the changing clustering results in view of t-SNE for the first five communication rounds on the Fashion-MNIST for $K=10$ and $K=3$, respectively, while the non-IID setting is $\alpha=(0.1,10)$ cluster-wise and the ground truth of cluster number $K$ is ten. Then, the clustering analysis of WeCFL can be summarized below,
\begin{itemize}
	\item The clustering converges very fast. For $K=10$, it takes only one communication round to converge. Even for $K=3$, it takes only three communication rounds to converge. With more communications, the inter-cluster distance becomes larger and intra-cluster distance becomes smaller.
	\item The clustering converges very well. For $K=10$, the clustering results exactly match the initial partition or ground truth. For $K=3$ that can not divide 10, the clustering results keep the initialized clusters and no break up.
	\item The range of the clusters or intra-cluster distance becomes smaller and smaller by the communication round for $K=10$ and $K=3$, which indicates that the clusterability measure $B$ is better and better. 
	% 	\item 
\end{itemize}

% The change of clustering results is shown in Appendix. The five lines illustrated represent five clusters, and every line is a mixture of thinner lines or a combination of clients. Changes in the cluster ID do not mean changes in clusters. No branches from one communication round to the next communication round mean no change. As illustrated in the first figure, there is only one branch in the client-wise non-IID setting. And the second figure demonstrates no branches in the cluster-wise non-IID setting. Moreover, the clustering results are exactly how the clusters are partitioned, which is unknown during the training, meaning that the clustering information is learned and clustering results keep stable. It can be explained that the clustering method based on parameters of the head can reflect some of a clients' genuine invariant attributes. This one-shot clustering trick has also been researched by other works \cite{ghosh2020efficient,Dennis2021-wc}.  Overall, the clustering objective converges much faster and better than expected.

\begin{figure}
	\vspace{-0.4cm}
	\centering
	\begin{minipage}{0.48\textwidth}
		\centering
		\includegraphics[width=1\textwidth]{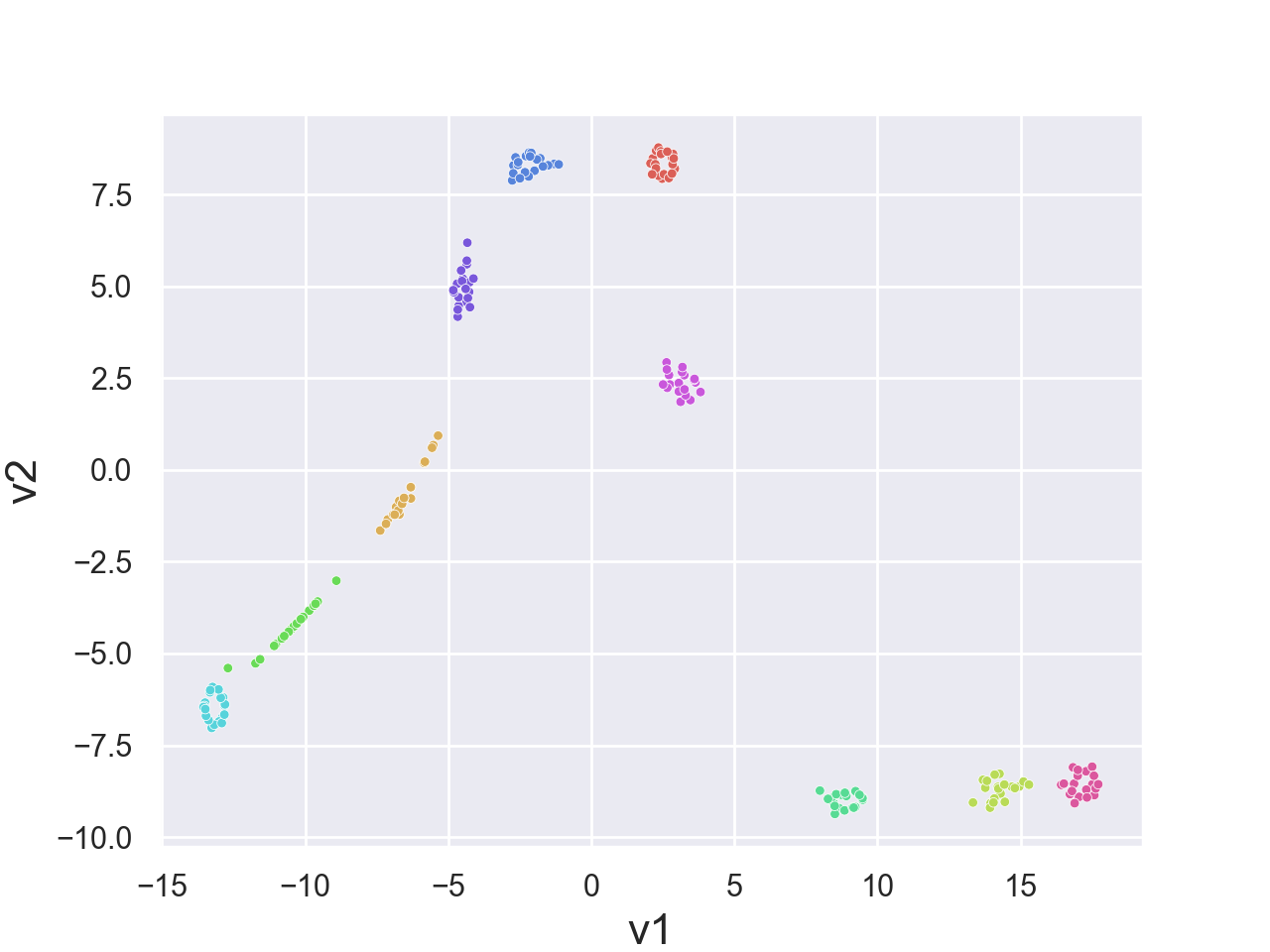} % first figure itself
		\vspace{-0.3cm}
	\end{minipage}\hfill
	\begin{minipage}{0.48\textwidth}
		\centering
		\includegraphics[width=1\textwidth]{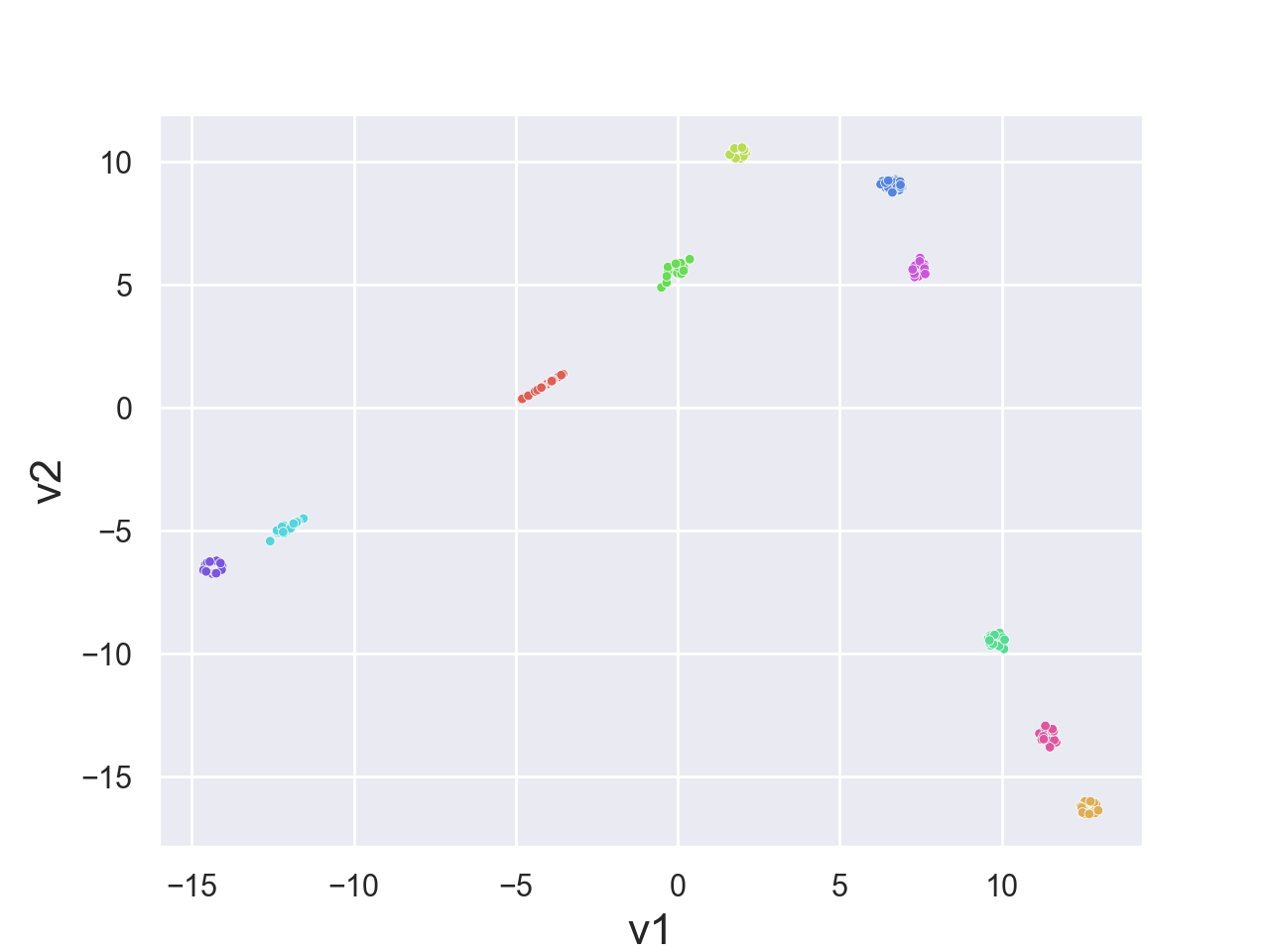} % second figure itself
		\vspace{-0.3cm}
	\end{minipage} \vfill
	\begin{minipage}{0.48\textwidth}
		\centering
		\includegraphics[width=1\textwidth]{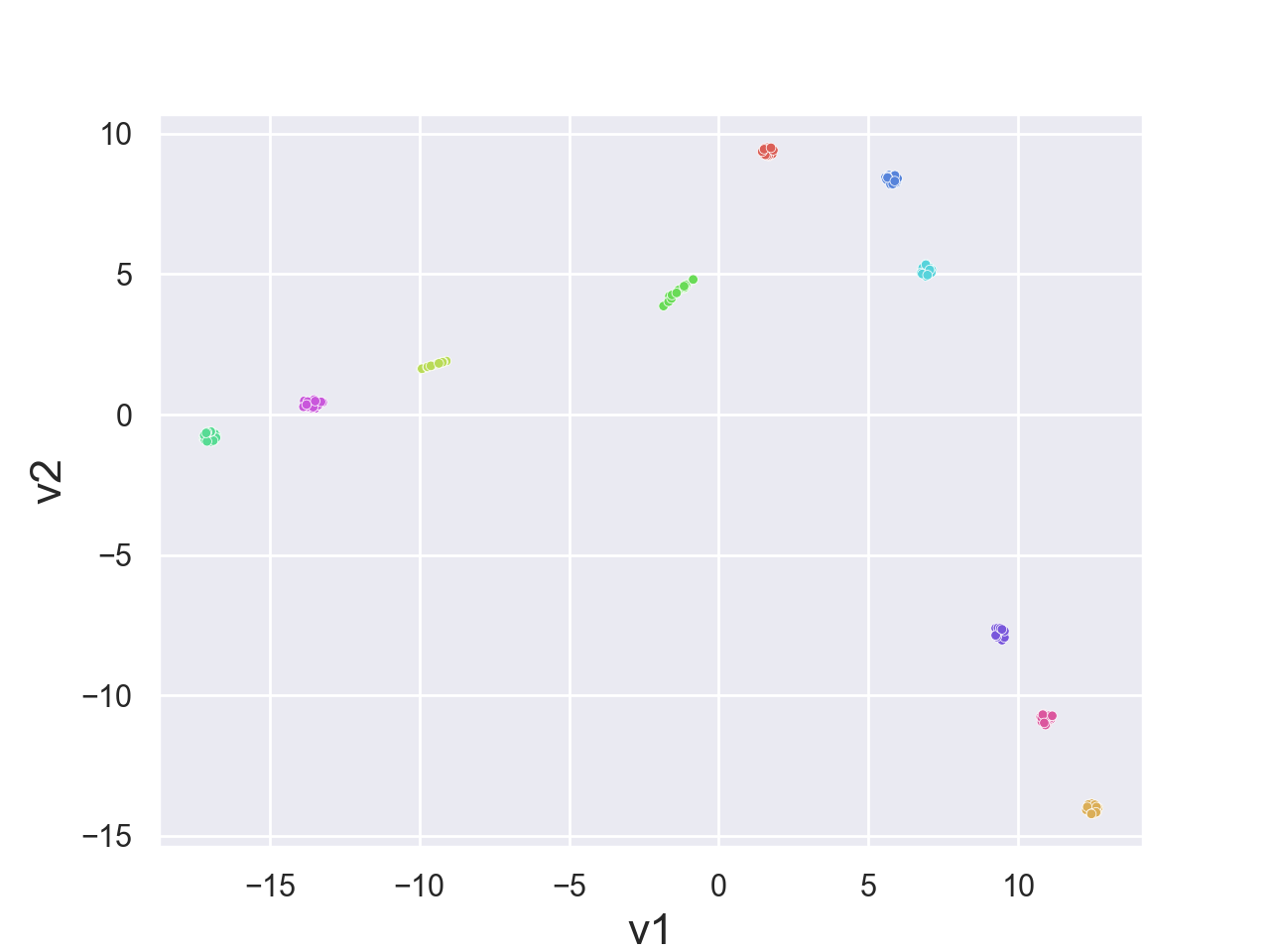} % second figure itself
		\vspace{-0.3cm}
	\end{minipage}\hfill
	\begin{minipage}{0.48\textwidth}
		\centering
		\includegraphics[width=1\textwidth]{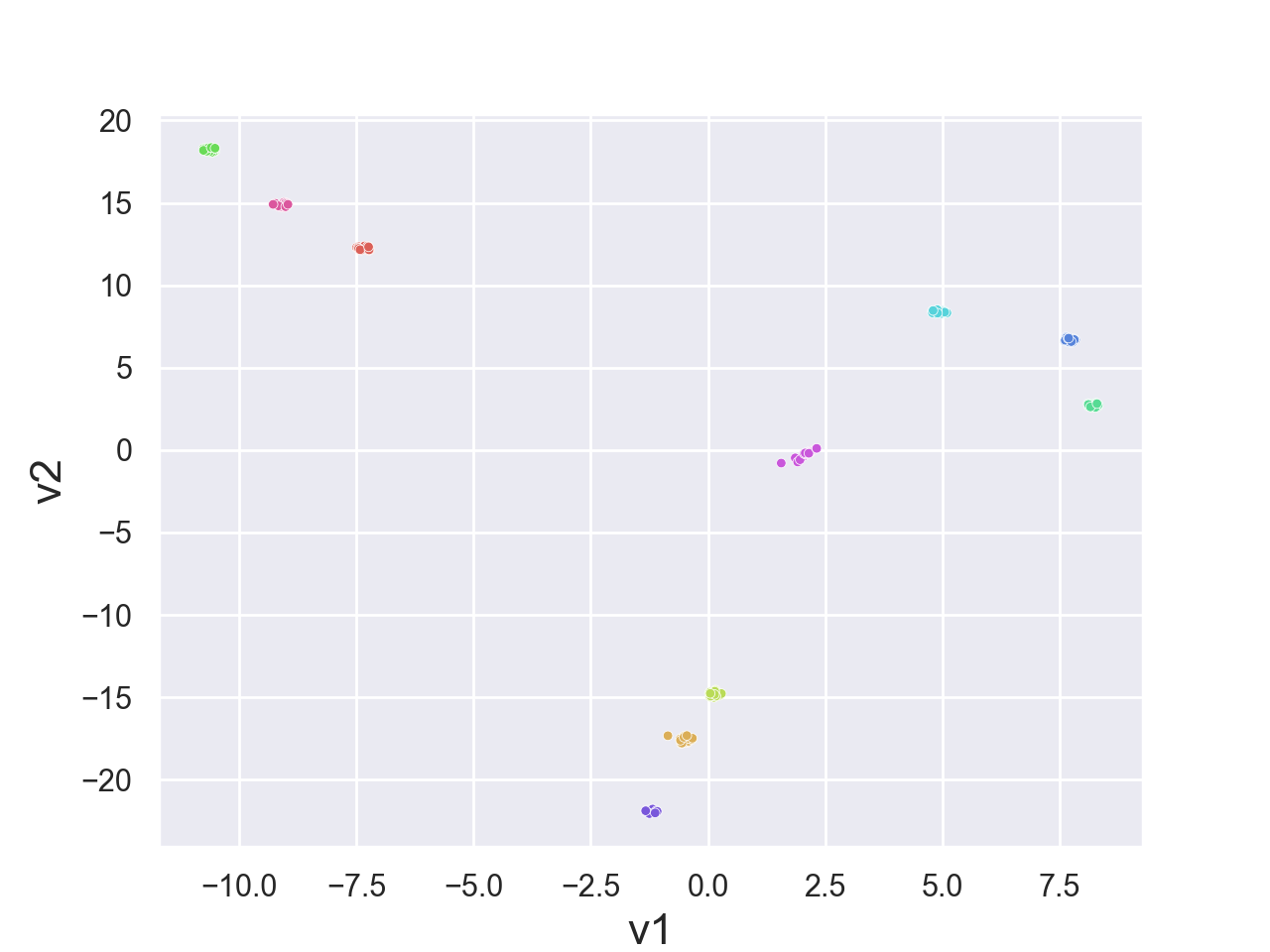} % second figure itself
		\vspace{-0.3cm}
	\end{minipage}\vfill
	\begin{minipage}{0.48\textwidth}
		\centering
		\includegraphics[width=1\textwidth]{figs/tsne_f.png} % second figure itself
		\vspace{-0.3cm}
	\end{minipage}
	\caption{\small T-SNE visualization of clustering results on the Fashion-MNIST in the first five communication rounds under the $\alpha=(0.1,10)$ cluster-wise non-IID setting, generated by 200 clients across $K=10$ clusters. Different colors represent different cluster labels. The order is left-to-right then top-to-bottom.}
	\vspace{-0.5cm}
	\label{fig:tsne_10}
\end{figure}

\begin{figure}
	\vspace{-0.4cm}
	\centering
	\begin{minipage}{0.48\textwidth}
		\centering
		\includegraphics[width=1\textwidth]{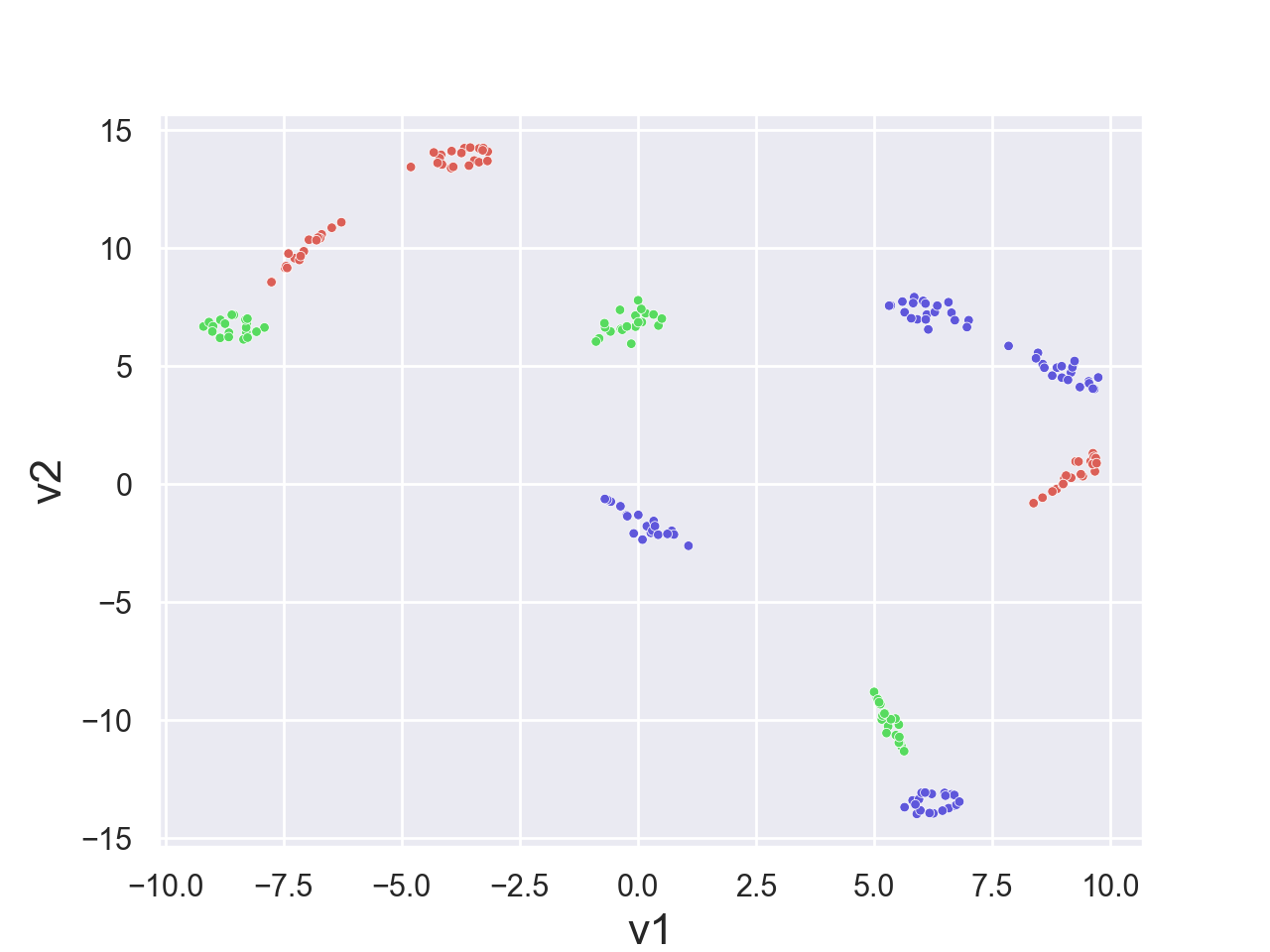} % first figure itself
		% \caption{\small Round 1}
		\vspace{-0.3cm}
	\end{minipage}\hfill
	\begin{minipage}{0.48\textwidth}
		\centering
		\includegraphics[width=1\textwidth]{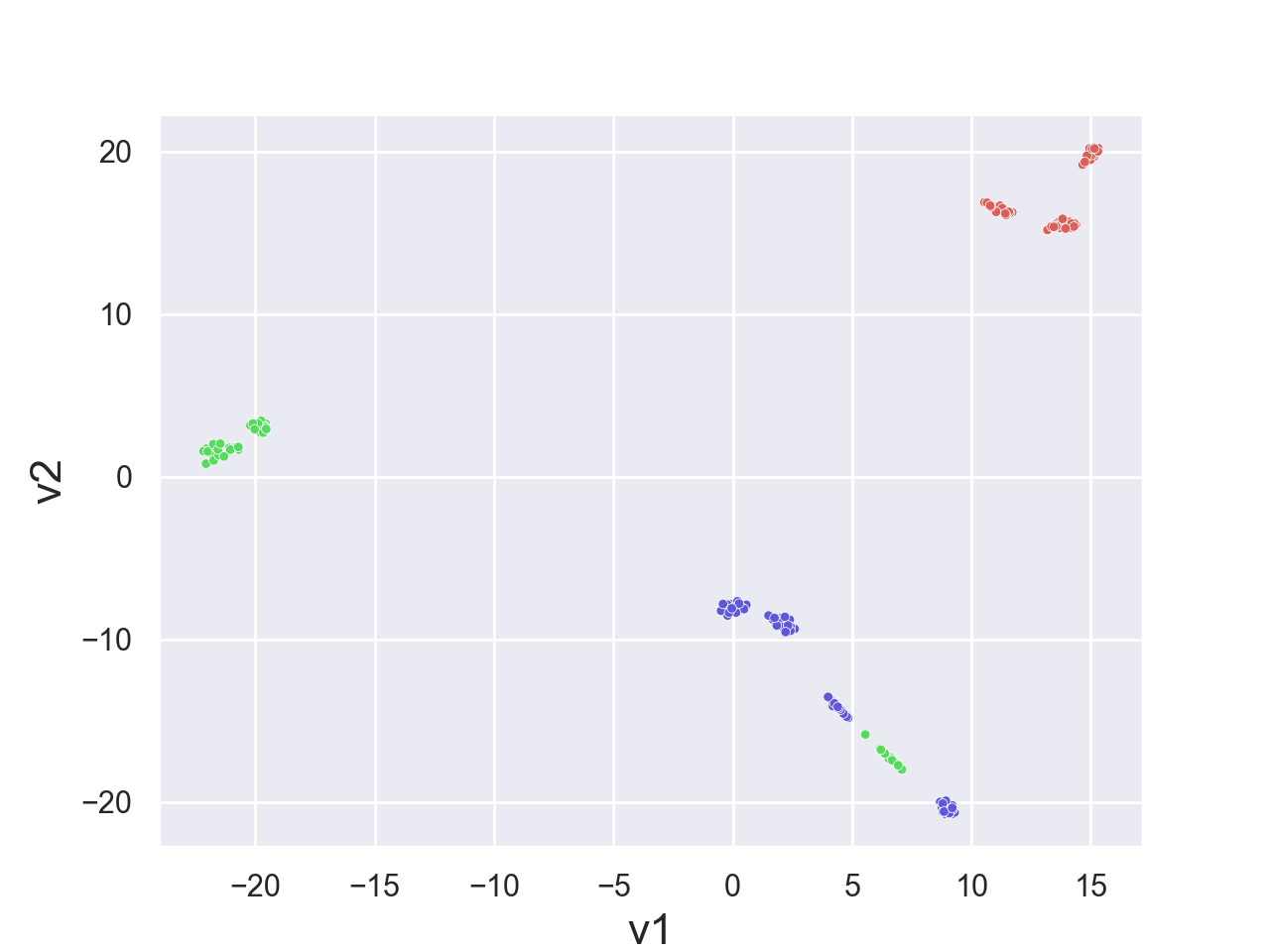} % second figure itself
		\vspace{-0.3cm}
	\end{minipage} \vfill
	\begin{minipage}{0.48\textwidth}
		\centering
		\includegraphics[width=1\textwidth]{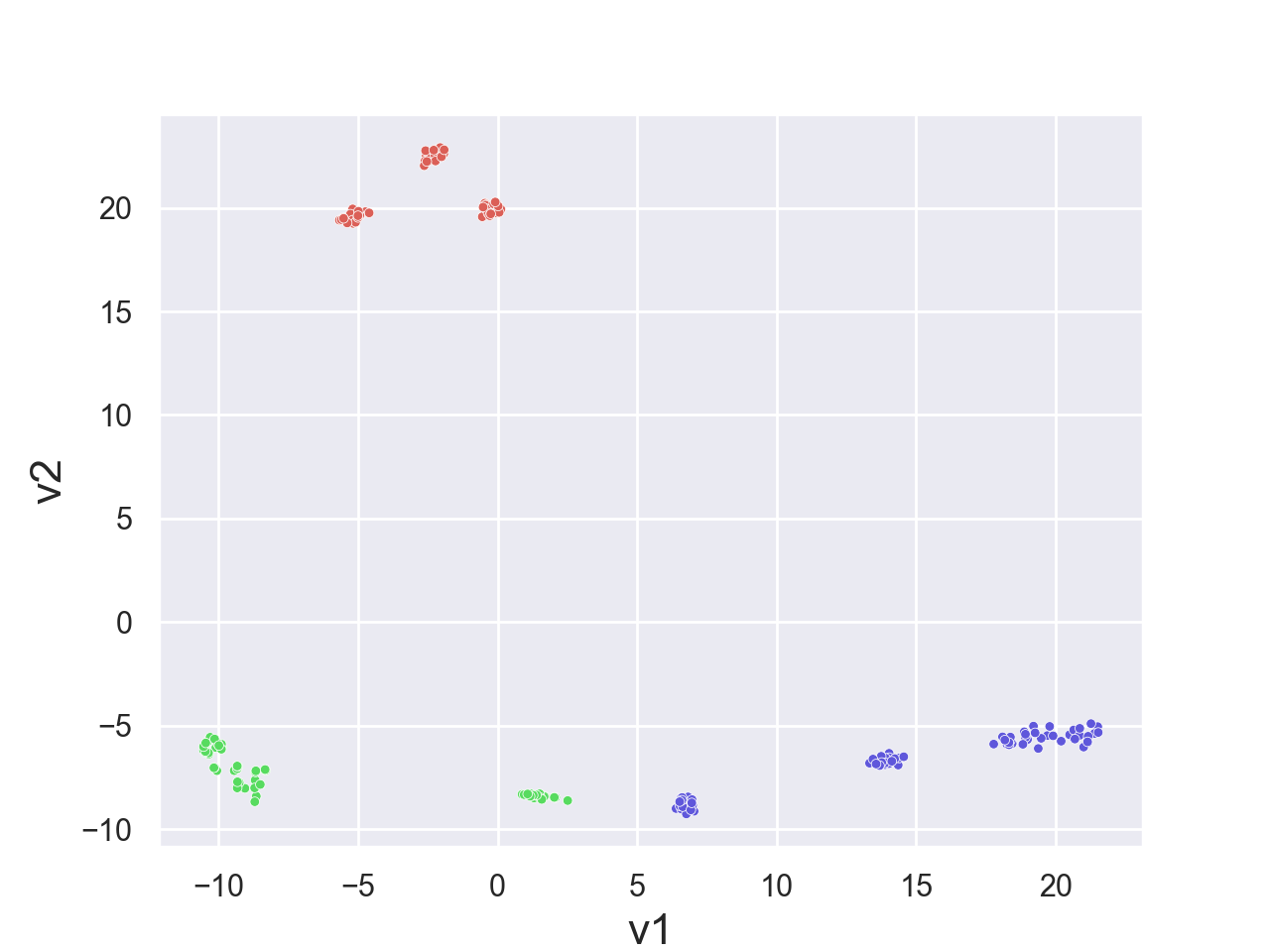} % second figure itself
		\vspace{-0.3cm}
	\end{minipage}\hfill
	\begin{minipage}{0.48\textwidth}
		\centering
		\includegraphics[width=1\textwidth]{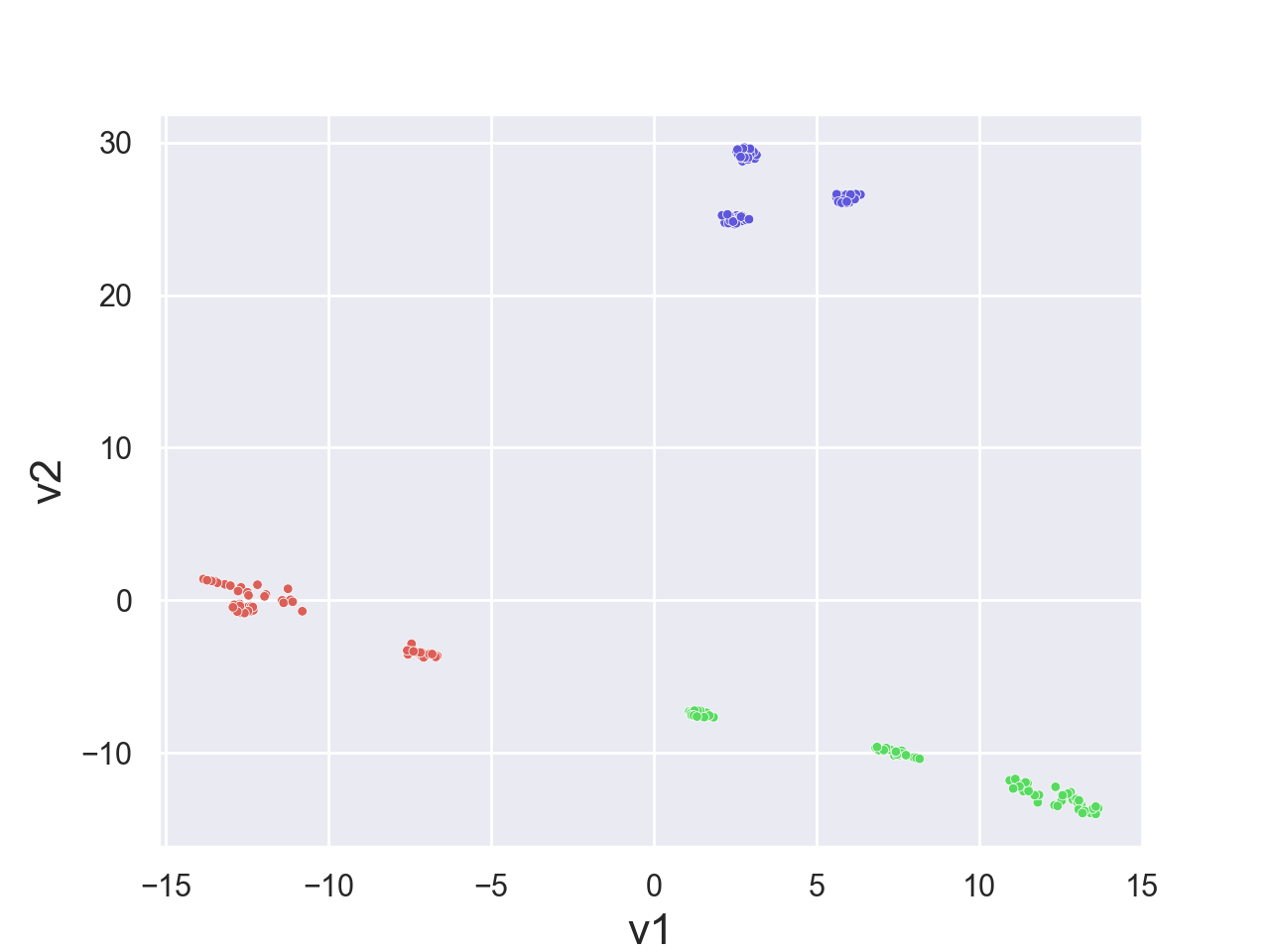} % second figure itself
		\vspace{-0.3cm}
	\end{minipage}\vfill
	\begin{minipage}{0.48\textwidth}
		\centering
		\includegraphics[width=1\textwidth]{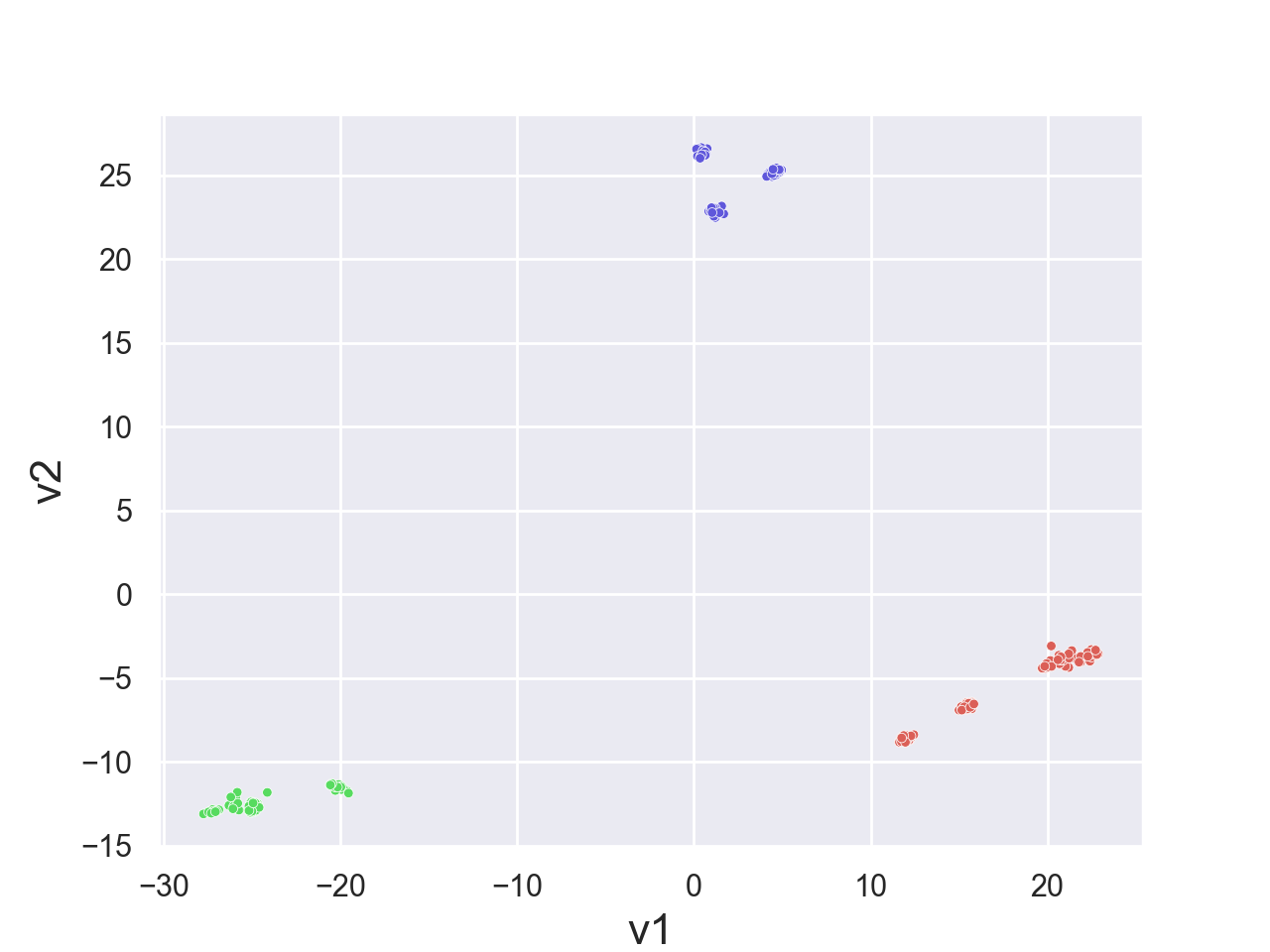} % second figure itself
		\vspace{-0.3cm}
	\end{minipage}
	\caption{\small T-SNE visualization of clustering results in the first five communication rounds on the Fashion-MNIST under the $\alpha=(0.1,10)$ cluster-wise non-IID setting, generated by 200 clients across $K=3$ clusters. Different colors represent different cluster labels. The order is left-to-right then top-to-bottom.}
	\vspace{-0.5cm}
	\label{fig:tsne_3}
\end{figure}

%%%%%%%%%%%%%%%%%%%%%%%%%%%%%%%%%%%%%%%%%%%%%%%%%%%%%%%%%%%%%%%%%%%%%%%%%%%%%%%
%%%%%%%%%%%%%%%%%%%%%%%%%%%%%%%%%%%%%%%%%%%%%%%%%%%%%%%%%%%%%%%%%%%%%%%%%%%%%%%

\end{document}